\def\eqref#1{equation~\ref{#1}}
\def\1{\bm{1}}
\DeclareMathAlphabet{\mathsfit}{\encodingdefault}{\sfdefault}{m}{sl}
\SetMathAlphabet{\mathsfit}{bold}{\encodingdefault}{\sfdefault}{bx}{n}
\newcommand{\R}{\mathbb{R}}
\newcommand{\KL}[2]{D_{\mathrm{KL}}(#1 \parallel #2)}
\tikzset{%
  every neuron/.style={
    circle,
    draw,
    minimum size=0.5cm
  },
  neuron missing/.style={
    draw=none, 
    scale=4,
    text height=0.333cm,
    execute at begin node=\color{black}$\vdots$
  },
}    
\theoremstyle{plain}
\newtheorem{theorem}{Theorem}[section]
\newtheorem{lemma}[theorem]{Lemma}
\newtheorem{corollary}[theorem]{Corollary}
\theoremstyle{definition}
\theoremstyle{remark}
\newtheorem{example}[theorem]{Example}
\newcommand{\Expect}{\mathbb{E}}
\newcommand{\BibTeX}{B\kern-.05em{\sc i\kern-.025em b}\kern-.08em\TeX}
\pgfplotsset{compat=1.18}
\begin{document}


\begin{frontmatter}


\paperid{2983} 


\title{A Neural Difference-of-Entropies Estimator for Mutual Information}


\author[A]{\fnms{Haoran}~\snm{Ni}\thanks{Corresponding Author. Email: haoran.ni.1@warwick.ac.uk}}
\author[B]{\fnms{Martin}~\snm{Lotz}}

\address[A]{CAMaCS, University of Warwick}
\address[B]{Mathematics Institute, University of Warwick}


\begin{abstract}
Estimating Mutual Information (MI), a key measure of dependence of random quantities without specific modeling assumptions, is a challenging problem in high dimensions. We propose a novel mutual information estimator based on parametrizing conditional densities using normalizing flows, a deep generative model that has gained popularity in recent years. This estimator leverages a block autoregressive structure to achieve improved bias-variance trade-offs on standard benchmark tasks.
\end{abstract}

\end{frontmatter}


\section{Introduction}\label{sec:intro}
Mutual Information (MI), a measure of dependence of random variables $X$ and $Y$, plays an important role in information theory~\citep{elements}, statistics and machine learning~\citep{tishby2000information,peng2005feature,vergara2014review,chen2018learning}, and biology and medical sciences~\citep{zhang2012inferring,sengupta2022survey}. For random variables $X$ and $Y$ with joint density $p$, the mutual information is defined as
\begin{equation*}
I(X;Y)=\KL{p}{p_X\otimes p_Y} = \mathbb{E}_{(X,Y)\sim p}\left[\log\frac{p(X,Y)}{p_X(X)p_Y(Y)}\right],
\end{equation*}  
where $p_X$ and $p_Y$ are the marginal densities of $X$ and $Y$, $p_X\otimes p_Y$ is the density of the product distribution, and $\KL{\cdot}{\cdot}$ is the Kullback-Leibler (KL) divergence. We consider the problem of estimating mutual information from a finite set of samples $\{(x_1,y_1),\dots,(x_N,y_N)\}$. 

Formally, an MI estimator $\hat{I}_N$ depends on independent and identically distributed (i.i.d.) random sample pairs $(X_1,Y_1),\dots,(X_N,Y_N)$, and should ideally be unbiased, consistent, and efficient. In addition, such an estimator should be effectively computable from large and high-dimensional data. 
We propose an unbiased and consistent mutual information estimator based on the difference-of-entropies (DoE) estimator suggested in~\citet{limit_mi}. This characterization expresses
the mutual information as the difference between the entropy of $X$ and the  conditional entropy of $X$ given $Y$,
\begin{equation*}
  I(X;Y) = H(X)-H(X\ | \ Y).
\end{equation*}
Each of the terms in this expression can be characterized as the infimum of a variational optimization problem.
Our implementation of this estimator is based on carefully chosen normalizing flows that simultaneously approximate the minimizing densities of each of the optimization problems.

\subsection{Overview of previous work}\label{sub:litreview} 
Traditional MI estimators are nonparametric estimators that depend on density estimation and Monte Carlo integration or on the calculation of $k$ nearest neighbors (kNN). Examples include the widely used KSG estimator by~\citet{ksg}, the nonparametric kNN estimator (kpN) by~\citet{kpn}, and improvements of the KSG estimator and a geometric kNN estimator by~\citet{biksg}.
These nonparametric methods are fast and accurate for low-dimensional and small-sized problems and are easy to implement. However, they suffer from the curse of dimensionality and do not scale well in machine learning problems since data sets can be relatively large and high-dimensional~\citep{paninski2003estimation}.

More recent parametric methods take advantage of deep learning architectures to approximate variational bounds on MI. These have been categorized into discriminative and generative approaches in \citet{smine}. Some state-of-the-art discriminative approaches include InfoNCE \citep{Infonce}, MINE \citep{mine}, SMILE \citep{smine}, CCMI \citep{ccmi} and DEMI \citep{demi}. \citet{Infonce} proposed a contrastive predictive coding (CPC) method that relies on maximizing a lower bound on mutual information.
The lower bound involves function approximators implemented with neural networks and is constrained by batch size $N$, leading to a method that is more biased, but with less variance. MINE, on the other hand, is based on the Donsker-Varadhan (DV) lower bound for the KL divergence. The fundamental limitations on approaches based on variational limits were studied by
~\citet{smine} and \citet{limit_mi}, the latter being the motivation for our approach.

Instead of constructing mutual information estimators based on variational lower bounds,~\citet{demi} proposed a classifier-based estimator called DEMI, 
where a parametrized classifier is trained to distinguish between the joint density $p(x,y)$ and the product $p(x)p(y)$. \citet{ccmi} proposed another classifier-based (conditional) MI estimator that is asymptotically equivalent to DEMI. However, it still relies on variational lower bounds and is prone to higher error than DEMI for finite samples, as summarized by \citet{demi}. 

Compared with discriminative approaches, generative approaches are less explored in MI estimation problems. A na\"ive approach using generative models for estimating MI is to learn the entropies $H(X), H(Y)$ and $H(X,Y)$ with three individual generative models, such as VAE~\citep{VAE} or Normalizing Flows~\citep{NF_currentmethods}, from samples. A method for estimating entropy using normalizing flows was introduced by~\citet{ao2022entropy}. 
Estimators based on the individual entropies will be highly biased and computationally expensive since the entropies are trained separately, while it is revealed that considering the enhancement of correlation between entropies in constructing MI estimators can improve the bias~\citep{biksg}. \citet{duong2023diffeomorphic} proposes the Diffeomorphic Information Neural Estimator (DINE), that takes advantage of the invariance of conditional mutual information under diffeomorphisms. 
An alternative approach to estimating MI using normalizing flows was recently proposed by~\citet{butakov2024mutual}. This approach takes advantage of the invariance of the point-wise mutual information under diffeomorphisms. In addition, the methods from~\citet{butakov2024mutual} allow for the estimation of mutual information using direct, closed-form expressions. Finally, we would like to point to the recently introduced MINDE estimator~\citep{minde}, which is based on diffusion models and represents a complementary approach.

From a practical point of view, the performance of MI estimators is often measured using standard data sets based on Gaussian distributions for which the ground truth is known. Recently, \citet{czyz2023beyond} proposed a collection of benchmarks to evaluate the performance of different MI estimators in more challenging environments.  

\subsection{Notation and conventions}\label{sub:notation}
The entropy of a random variable with density $p$ is defined as $H(X)=-\Expect[\log p(X)]$, and we sometimes write $H(p)$ to highlight the dependence on the density. Throughout this paper, we work with absolutely continuous random variables and distributions.


\section{Mutual Information and Normalizing Flows}
We begin by introducing the characterization of mutual information in terms of entropy that forms the basis of our approach. We then introduce normalizing flows and discuss an implementation of our mutual information estimator.

\subsection{Mutual Information and Entropy}
Given a pair of random variables $(X,Y)$, the conditional entropy of $X$ with respect to $Y$ is defined
as $H(X|Y) = H(X,Y)-H(Y)$,
where $H(X,Y)$ is the joint entropy of $(X,Y)$ (not to be confused with the cross-entropy, introduced below).
The mutual information can be expressed in terms of entropies via the 3H principle: 
\begin{align}\label{eq:3h}
\begin{split}
I(X;Y) &= H(X)+H(Y)-H(X,Y) \\
&\stackrel{(*)}{=} H(X)-H(X|Y).
\end{split}
\end{align}
The characterization (*) is the basis of the difference-of-entropies (DoE) estimator introduced by~\citet{limit_mi}.

The entropy of a random variable can be characterized as the solution of a variational optimization problem involving the cross-entropy. The cross-entropy between random variables $X$ and $Y$ with densities $p$ and $q$, respectively, is defined as
\begin{equation*}
  Q(p,q) := -\Expect_{p}[\log q(X)].
\end{equation*}
One easily checks that the cross-entropy, entropy, and KL divergence are related via
\begin{equation}\label{eq:cross-kl}
  Q(p,q) = H(X)+\KL{p}{q}.
\end{equation} 
The KL divergence is non-negative and satisfies $\KL{p}{q}=0$ if and only if $p=q$ almost everywhere. A well-known consequence of this fact is the following characterization of the entropy of a random variable with density $p$:
\begin{equation*}
  H(X) = \inf_{q} Q(p,q),
\end{equation*}
where the infimum is taken over all probability densities $q$.

The conditional entropy $H(X|Y)$ is itself the cross-entropy of the conditional density $p_{X|Y}=p/p_Y$ with respect to the joint density $p$,
\begin{equation*}
  H(X|Y) = -\Expect_p\left[\log\frac{p(X,Y)}{p_Y(Y)}\right] = Q(p,p_{X|Y}).
\end{equation*}
Note that a conditional probability density is not a joint density, as it does not integrate to $1$, but the definition of cross-entropy and KL-divergence still makes sense. The proof of the following result is simple and is included for reference.

\begin{lemma}\label{le:lemmainf}
Let $(X,Y)$ be a pair of random variables with joint density $p$. Then
\begin{equation*}
  H(X|Y) = \inf_{q} Q(p,q),
\end{equation*}
where the infimum is over all conditional densities, i.e., non-negative functions $q(x|y)$ such that $\int_x q(x|y) \ \mathrm{d}x=1$ for all $y$. 
\end{lemma}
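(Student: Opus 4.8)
The plan is to reduce the claim to the non-negativity of the KL divergence, exactly as in the unconditional identity $H(X)=\inf_q Q(p,q)$, but applied fibrewise in $y$. First I would fix an arbitrary admissible competitor $q$, i.e.\ a non-negative function $q(x|y)$ with $\int_x q(x|y)\,\mathrm{d}x=1$ for all $y$, and compute the gap using the identity $H(X|Y)=Q(p,p_{X|Y})$ recorded just before the statement together with linearity of expectation (both $Q(p,q)$ and $H(X|Y)$ are of the form $-\Expect_p[\log(\cdot)]$):
\begin{equation*}
  Q(p,q)-H(X|Y) = \Expect_p\left[\log\frac{p_{X|Y}(X|Y)}{q(X|Y)}\right].
\end{equation*}

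Next I would disintegrate the expectation by writing $p(x,y)=p_Y(y)\,p_{X|Y}(x|y)$ and using Tonelli's theorem, so that
\begin{equation*}
  Q(p,q)-H(X|Y) = \int p_Y(y)\left(\int p_{X|Y}(x|y)\log\frac{p_{X|Y}(x|y)}{q(x|y)}\,\mathrm{d}x\right)\mathrm{d}y = \int p_Y(y)\,\KL{p_{X|Y}(\cdot\,|\,y)}{q(\cdot\,|\,y)}\,\mathrm{d}y.
\end{equation*}
The crucial observation is that for each fixed $y$ both $x\mapsto p_{X|Y}(x|y)$ and $x\mapsto q(x|y)$ are genuine probability densities in $x$ — non-negative and integrating to $1$ — so the inner integral is a bona fide KL divergence and hence $\geq 0$; integrating it against the non-negative weight $p_Y(y)$ preserves non-negativity. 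This yields $Q(p,q)\geq H(X|Y)$ for every conditional density $q$. Finally, since $p_{X|Y}$ is itself an admissible competitor and $Q(p,p_{X|Y})=H(X|Y)$, the lower bound is attained, so the infimum equals $H(X|Y)$.

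The one place requiring a little care — and the only real obstacle — is the measure-theoretic bookkeeping: a conditional density is not a joint density and need not be integrable over $(x,y)$, so I would be explicit that the disintegration of $p$ along $Y$ is legitimate and that Tonelli applies to the relevant non-negative integrands (splitting $\log$ into positive and negative parts, or simply reading the gap as a well-defined element of $[0,+\infty]$, and noting the usual caveat that the entropies may be infinite). Modulo this, the proof is the same one-line consequence of Gibbs' inequality used for the unconditional entropy, now integrated over $y$.
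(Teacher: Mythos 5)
Your proof is correct, but it takes a different route from the paper's. You disintegrate the gap $Q(p,q)-H(X|Y)$ into a $p_Y$-weighted average of fibrewise divergences $\int p_Y(y)\,\KL{p_{X|Y}(\cdot|y)}{q(\cdot|y)}\,\mathrm{d}y$ and invoke Gibbs' inequality once per fibre. The paper instead lifts the competitor to a genuine joint density $\tilde{q}(x,y)=q(x|y)\,p_Y(y)$ and writes the gap as a single joint divergence, $Q(p,q)=H(X|Y)+\KL{p}{\tilde{q}}$, so that non-negativity and the equality case follow immediately from the already-recorded identity $Q(p,\tilde q)=H(X,Y)+\KL{p}{\tilde q}$ with no disintegration at all. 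The two quantities are of course equal (chain rule for the KL divergence, since $p$ and $\tilde q$ share the $Y$-marginal), so the proofs are two faces of the same inequality. What the paper's trick buys is exactly the bookkeeping you flag as the delicate point: by working with two joint probability densities it needs no Tonelli argument and no fibrewise integrability discussion. What your version buys is a more transparent structural statement — the gap is the expected conditional KL divergence — and, as a bonus, a cleaner identification of the equality case ($q(\cdot|y)=p_{X|Y}(\cdot|y)$ for $p_Y$-a.e.\ $y$). One small caution if you write yours out in full: the integrand $p_{X|Y}\log(p_{X|Y}/q)$ is not pointwise non-negative, so Tonelli does not apply to it directly; either add the correction term $q-p_{X|Y}$ (using $a\log(a/b)\geq a-b$) to get a non-negative integrand, or argue at the level of the outer integral of the non-negative fibrewise KL values, as you suggest.
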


\begin{proof}
Let $q(x|y)$ be a conditional density. Then
\begin{align*}
  Q(p,q) &= -\Expect_p[\log q(X|Y)]\\
  &= H(X|Y) +\Expect_p[\log p(X,Y)]-\Expect_p[\log(q(X|Y)p_Y(Y))]\\
  &= H(X|Y)-H(X,Y)+Q(p,\tilde{q})\\
  &= H(X|Y)+\KL{p}{\tilde{q}},
\end{align*}
where $\tilde{q}(x,y)=q(x|y)\cdot p_Y(y)$ is a probability density. By~\eqref{eq:cross-kl}, $Q(p,\tilde{q})\geq H(X,Y)$, with equality if and only if $\tilde{q}=p$, i.e., $q=p_{X|Y}$. 
\end{proof}

As a consequence of the proof (or by direct inspection) we get the following observation.

\begin{corollary}
Let $(X,Y)$ be a pair of random variables with density $p$ and let $q(x,y)$ be a probability density. Then
\begin{equation*}
  Q(p,q/p_Y) = \KL{p}{q}+H(X|Y)
\end{equation*}
\end{corollary}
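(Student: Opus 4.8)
The plan is to unwind the definition of the left-hand side and recognize the right-hand side from the computation already carried out in the proof of Lemma~\ref{le:lemmainf}. Write $\tilde q(x,y) = (q/p_Y)(x|y)\cdot p_Y(y) = q(x,y)$, so that the conditional density $q/p_Y$ plays the role of the generic conditional density in the lemma, and its associated joint density is exactly $q$ itself. Substituting into the chain of equalities in the lemma's proof (with $q/p_Y$ in place of $q$ and $\tilde q = q$) immediately gives
\begin{equation*}
  Q(p, q/p_Y) = H(X|Y) + \KL{p}{q},
\end{equation*}
which is the claim.

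Alternatively, and perhaps more transparently, I would give a direct two-line computation. By definition of the cross-entropy,
\begin{equation*}
  Q(p, q/p_Y) = -\Expect_p\left[\log \frac{q(X,Y)}{p_Y(Y)}\right]
  = -\Expect_p[\log q(X,Y)] + \Expect_p[\log p_Y(Y)].
\end{equation*}
Now add and subtract $\Expect_p[\log p(X,Y)]$: the combination $-\Expect_p[\log q(X,Y)] + \Expect_p[\log p(X,Y)]$ is $\KL{p}{q}$, while $-\Expect_p[\log p(X,Y)] + \Expect_p[\log p_Y(Y)] = -\Expect_p[\log(p(X,Y)/p_Y(Y))] = H(X|Y)$ by the displayed formula for $H(X|Y)$ just before the lemma. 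Collecting the two pieces yields the corollary.

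There is no real obstacle here; this is a bookkeeping identity. The only points requiring a word of care are that $q/p_Y$ is a bona fide conditional density (which holds since $\int_x q(x,y)\,\mathrm{d}x = p_Y(y)$ forces $\int_x (q/p_Y)(x|y)\,\mathrm{d}x = 1$), and that all the relevant expectations are finite so that the additive splitting is legitimate — this is covered by the standing assumption that we work with absolutely continuous distributions for which these entropy-type quantities are well defined. I would present the direct computation as the proof, since it is self-contained and makes the role of each term explicit.
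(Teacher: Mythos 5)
Your proof is correct and follows essentially the route the paper itself indicates (the corollary is stated there "as a consequence of the proof \dots or by direct inspection", and you supply both versions); the direct two-line computation you give is exactly the intended argument. One correction to your closing remark, though: it is \emph{not} true that $q/p_Y$ is a bona fide conditional density, because $\int_x q(x,y)\,\mathrm{d}x = q_Y(y)$, the $Y$-marginal of $q$, which need not equal $p_Y(y)$ (the $Y$-marginal of $p$); so $\int_x q(x,y)/p_Y(y)\,\mathrm{d}x = q_Y(y)/p_Y(y) \neq 1$ in general. Consequently you cannot literally invoke Lemma~\ref{le:lemmainf} with $q/p_Y$ as "the generic conditional density" --- what you can reuse is the chain of equalities in its proof, which is pure algebra and never uses the normalization $\int_x q(x|y)\,\mathrm{d}x=1$ (that hypothesis is only needed for the final step $Q(p,\tilde q)\geq H(X,Y)$, which the corollary does not use). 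The paper anticipates this by remarking that the definitions of cross-entropy and KL divergence still make sense for non-normalized densities, and your self-contained direct computation is unaffected, so the result stands; just drop or amend the normalization claim.
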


Together with the 3H principle,~\eqref{eq:3h}, we get

\begin{equation}\label{eq:mainchar}
  I(X;Y) = \inf_{q_X}Q(p_X,q_X)-\inf_{q_{X|Y}} Q(p,q_{X|Y}).
\end{equation}

Given data $\{(x_i,y_i)\}_{i=1}^N$, the resulting difference-of-entropies (DoE) estimator, as suggested by~\citet{limit_mi}, consists of minimizing the objectives

\begin{equation}\label{eq:cross-entropy-est}
\begin{split}
  \hat{Q}(p_X,q_X)& = -\frac{1}{N} \sum_{i=1}^N \log q_X(x_i),\\
  \hat{Q}(p,q_{X|Y})& = -\frac{1}{N}\sum_{i=1}^N \log q_{X|Y}(x_i|y_i)
\end{split}
\end{equation}

with respect to $q_X$ and $q_{X|Y}$. 
In our implementation of the DoE estimator, we parametrize these densities jointly, rather than separately, using block autoregressive normalizing flows.

\subsection{Normalizing flows}\label{sec2.2_nf&kl_divergence}
A popular way of estimating densities is via normalizing flows, where the density to be estimated is seen as the density of a push-forward distribution of a simple base distribution, and the transformation is implemented using invertible neural networks. 
Let $g\colon \R^n\to \R^n$ be a measurable function, and let $\mu$ be a probability measure. The push-forward measure $g_*\mu$ is defined as
\begin{equation*}
    g_*\mu(A)=\mu(g^{-1}(A))
\end{equation*}
for all measurable $A$. 
%
The density of a random variable $X$ that has the push-forward distribution of an absolutely continuous random variable $Z$ with density $p_{Z}$ with respect to a diffeomorphism $g$ is also absolutely continuous, with a density function $p_{X}$ given by
\begin{align*}
\begin{split}
    p_{X}(x)=p_{Z}(g^{-1}(x))\cdot\abs{\det\textnormal{d}g(g^{-1}(x))}^{-1},
\end{split}
\end{align*}
where $\textnormal{d}g(z)$ denotes the differential of $g$ at $z$ (in coordinates, given by the Jacobian matrix). 

It is known that any continuous distribution with density $p_{X}$ satisfying some mild conditions can be generated from the uniform distribution on a cube $[0,1]^n$ (and hence, by invertibility, from any other distribution satisfying the same conditions) if the transformation $f$ can have arbitrary complexity~\citep{triangular}. However, as is common with universal approximation results, this result does not translate into a practical recipe~\citep{Liu2024Universal}. A more practical approach is to use a composition of simple functions implemented by neural networks, which have sufficient expressive power.
An obvious but important property of diffeomorphisms is that they are composable. Specifically, let $g_1,g_2, \dots, g_K$ be a set of $K$ diffeomorphisms and denote by $g=g_K\circ g_{K-1}\circ\cdots\circ g_1$ the composition of these functions. The determinant of the Jacobian is then given by
\begin{equation*}
    \det\textnormal{d}g(z)=\prod_{i=1}^{K}\det\textnormal{d}g_i(z_i),
\end{equation*}
where $z_i=g_{i-1}\circ \cdots \circ g_1(z)$ for $i\geq 2$ and $z_1=z$ and $z_{K+1}=x=g(z)$.
Similarly, for the inverse of $f$, we have 
\begin{equation*}
    g^{-1}=g^{-1}_1\circ\cdots\circ g^{-1}_K,
\end{equation*}
and the determinant of the Jacobian is computed accordingly. Thus, we can construct more complicated functions with a set of simpler, bijective functions. The two crucial assumptions in the theory of normalizing flows are thus invertibility ($g^{-1}$ should exist) and simplicity (each of the $g_i$ should be simple in some sense). The inverse direction, $f=g^{-1}$, is called the normalizing direction: it transforms a complicated distribution into a Gaussian, or normal distribution. For completeness and reference, we reiterate the transformation rule in terms of the normalizing map:

\begin{equation}\label{eq:normalizing-det}
  \log p_X(x) = \log p_Z(f(x)) + \log |\det \mathrm{d}f(x)|.
\end{equation}

Normalizing flows are fitted by minimizing the KL divergence between a model $p_{X}(x;\Theta)$ and an unknown target distribution $p_{X}^*(x)$ from which we only see samples. Here, the model parameters are denoted as $\Theta=\{\phi,\psi\}$, where $\phi$ are the parameters of the normalizing function $f_{\phi}$, and $\psi$ are the parameters of the base density $p_{Z}(z;\psi)$. Because the KL divergence is asymmetric, the order in which the probabilities are listed is important, which leads to two different cost functions, the forward and the reverse KL divergence. In our work, we only focus on the forward KL divergence $D_{\textnormal{KL}}\ (p_{X}^*\parallel p_{X}(\cdot ;\Theta))$ since it applies in situations when we have no way to evaluate the target density $p_{X}^*(x)$, but we have (or can generate) samples from the target distribution. 

In light of~\eqref{eq:cross-kl}, minimizing the forward KL divergence is equivalent to minimizing the cross-entropy:
\begin{align}
\begin{split}\label{crossentropy}
   \mathcal{L}(\Theta) &:=Q(p_{X}^*,p_{X}(\cdot ;\Theta)) =-\mathbb{E}_{p_{X}^*}[\log p_{X}(X;\Theta)]\\
    &=-\mathbb{E}_{p_{X}^*}[\log p_{Z}(f_\phi(X);\psi)+\log|\det \textnormal{d}f_\phi(X)| ].
\end{split}
\end{align}

Given a set of samples $\{x_j\}_{j=1}^{N}$ from $p_{X}^*(x)$, $\mathcal{L}(\Theta)$ can be estimated by replacing the expectation with the empirical mean, which leads to the cost function
\begin{equation}\label{KL_monte_carlo}
   \hat{\mathcal{L}}(\Theta) := -\frac{1}{N}\sum_{j=1}^{N}(\log p_{Z}(f_\phi(x_j);\psi)+\log|\det \textnormal{d}f_\phi(x_j)|).
\end{equation}
Equation~\ref{KL_monte_carlo} is a Monte Carlo estimate of the cross entropy between the target distribution and the model distribution. The cost function $\mathcal{L}(\Theta)$ is minimized when $p_X^* = p_X(\cdot ;\Theta)$, and the optimal value is the entropy of $X$. If the model is expressive enough to characterize the target distribution, then minimizing~\ref{KL_monte_carlo}
over the parameters yields an entropy estimator. 

\subsubsection{Block autoregressive flows}\label{sec:block}
Autoregressive flows~\citep{iaf} are normalizing flows with the convenient property that their Jacobian is triangular. 
Block neural autoregressive flows (B-NAF), introduced by~\citet{BNAF}, are flows that are autoregressive and monotone, but that are implemented using a single neural network architecture, rather than relying on conditioner networks. More specifically, a block autoregressive flow is given as a sequence of transformations
\begin{equation*}
 f\colon \R^d\to \R^{da_1}\to \cdots \to \R^{da_\ell}\to \R^{d},
\end{equation*}
where each $f^{k}\colon \R^{da_{k}}\to \R^{da_{k+1}}$ is given by $f^{k}(x)=\sigma(W^{k}x+b^{k})$ with $\sigma$ a strictly increasing activation function, and $W^{k}$ is a block matrix of the form
\begin{equation*}
  W^{k} = \begin{bmatrix}
    g(B_{11}^{(k)}) & 0 & \cdots & 0\\
    B_{21}^{(k)} & g(B_{22}^{(k)}) & \cdots & 0\\
    \vdots & \vdots & \ddots & \vdots\\
    B_{d1}^{(k)} & B_{d2}^{(k)} & \cdots & g(B_{dd}^{(k)})
\end{bmatrix},
\end{equation*}
where each $B_{ij}^{(k)}\in R^{a_{k+1}\times a_k}$ and $g(x)=\exp(x)$ is applied component-wise, to ensure that the entries are positive. We set $a_0=a_{\ell+1}=1$. It is not hard to see that the $i$-component of $f(x)$ only depends on $x_1,\dots,x_i$.
Since the product of block diagonal matrices with blocks of size $a\times b$ and $b\times c$, respectively, is block diagonal with size $a\times c$, the composition $f$ has a lower triangular Jacobian with positive diagonal entries, and hence is invertible. The determinant of the triangular Jacobian matrix is the product of the diagonal entries $\partial f_i/\partial x_i$, each of which can be computed as product
\begin{equation*}
  \frac{\partial f_i}{\partial x_i} = \prod_{k=0}^{\ell} g(B_{ii}^{(k)}).
\end{equation*}
In practice, implementations of B-NAF use masked networks and gated residual connections to improve stability, but this does not alter the analysis. Just as with neural autoregressive flows, it can be shown that B-NAF are universal density estimators.

\section{Joint estimation of Mutual Information}\label{autoreg_mi_estimator}
Our goal is to minimize the functions in~\eqref{eq:mainchar}, where the density $q_X(x)$ and the conditional density $q_{X|Y}(x|y)$ are parametrized using normalizing flows. 
We implement the difference of entropies (DoE) estimator by constructing a specific neural network structure that can estimate the two entropies in~\eqref{eq:cross-entropy-est} in the same framework by ``deactivating'' the certain sub-network. Technically, this is implemented by using a mask to set the contributions coming from one part of the network to another to zero.

To motivate the architecture, consider the network in Figure~\ref{fig:nnn}, implementing a flow $f\colon \R^2\to \R^2$ given as a composition $f=f^2\circ f^1\circ f^0$ with $f^0\colon \R^2\to \R^4$, $f^1\colon \R^4\to \R^4$ and $f^2\colon \R^4\to \R^2$. Hence, $a_1=a_2=2$ and the corresponding neural network has the form shown in Figure~\ref{fig:nnn}.
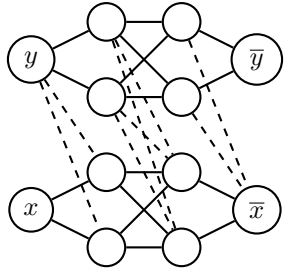
\begin{figure}[!ht]
\centering
\begin{tikzpicture}[thick, scale=1.0]
  \node [every neuron] (I1) at (-2,1) {$y$};
  \node [every neuron] (I2) at (-2,-1) {$x$};
  \node [every neuron] (M1) at (-1,1.5) {};
  \node [every neuron] (M2) at (-1,0.5) {};
  \node [every neuron] (M3) at (-1,-0.5) {};
  \node [every neuron] (M4) at (-1,-1.5) {};
  \node [every neuron] (N1) at (0,1.5) {};
  \node [every neuron] (N2) at (0,0.5) {};
  \node [every neuron] (N3) at (0,-0.5) {};
  \node [every neuron] (N4) at (0,-1.5) {};
  \node [every neuron] (O1) at (1,1) {$\overline{y}$};
  \node [every neuron] (O2) at (1,-1) {$\overline{x}$};
  
  \draw[-] (I1) -- (M1);
  \draw[-] (I1) -- (M2);
  \draw[-, dashed] (I1) -- (M3);
  \draw[-, dashed] (I1) -- (M4);
  \draw[-] (I2) -- (M3);
  \draw[-] (I2) -- (M4);
  \draw[-] (M1) -- (N1);
  \draw[-] (M1) -- (N2);
  \draw[-] (M2) -- (N1);
  \draw[-] (M2) -- (N2);
  \draw[-, dashed] (M1) -- (N3);
  \draw[-, dashed] (M1) -- (N4);
  \draw[-] (M3) -- (N3);
  \draw[-] (M3) -- (N4);
  \draw[-, dashed] (M2) -- (N3);
  \draw[-, dashed] (M2) -- (N4);
  \draw[-] (M4) -- (N3);
  \draw[-] (M4) -- (N4);
  \draw[-] (N1) -- (O1);
  \draw[-] (N2) -- (O1);
  \draw[-, dashed] (N1) -- (O2);
  \draw[-, dashed] (N2) -- (O2);
  \draw[-] (N3) -- (O2);
  \draw[-] (N4) -- (O2);

\end{tikzpicture}
\hspace{5mm}
\caption{A Block Autoregressive Flow $f(y,x)$. Solid lines represent positive weights.}\label{fig:nnn}
\hspace{150mm}
\end{figure}

Recall that block autoregressive flows have the property that $f_i$ depends only on the first $i$ variables. In particular, we can express the function $f$ as
\begin{equation*}
 f(y,x) = (f_1(y), f_2(y,x)).
\end{equation*}
The Jacobian determinant is the product of the partial derivatives $\partial f_1/\partial y$ and $\partial f_2/\partial x$ (see Section~\ref{sec:block}). Suppose $p(x,y)$ is a standard Gaussian density, so that $p(x,y)=p_X(x)p_Y(y)$, and that we have data $(x_i,y_i)$ from an unknown distribution $q$. The cost function~\eqref{KL_monte_carlo} for learning a normalizing flow takes the form
\begin{equation}\label{eq:cross-example}
\begin{split}
   \hat{\mathcal{L}}(\Theta) =& -\frac{1}{N}\sum_{i=1}^N \left(\log p_X(f_2(y_i,x_i))+\log \frac{\partial f_2}{\partial x}(y_i,x_i)\right)\\
   &+\left(\log p_Y(f_1(y_i)) +\log \frac{\partial f_1}{\partial y}(y_i)\right).
\end{split}
\end{equation}
The components $f_1$ and $f_2$ depend on a distinct set of weights in the neural network. Optimizing only the part of~\eqref{eq:cross-example} involving $f_1$ on data $\{y_i\}$ gives an estimate for the entropy of $Y$, while optimizing the part with $f_2$ on data $\{(x_i,y_i)\}$ gives rise to an estimate of the conditional entropy $H(X\ | \ Y)$. Moreover, if we deactivate the weights in off-diagonal blocks (the dashed lines), then optimizing this part on data $\{x_i\}$ gives an estimate of $H(X)$. Note that training for $H(X\ | \ Y)$ and setting the off-diagonal weights to zero does not automatically give an estimator for $H(X)$. It is, however, conceivable that one can begin with a network that approximates $H(X)$ and then optimize the off-diagonal weights to obtain an approximation of $H(X|Y)$.

In general, we consider a flow $f\colon \R^{2n}\to \R^{2n}$ with a block autoregressive structure, given by $f(y,x) = (f_1(y),f_2(y,x))$ with $x\in \R^n$, $y\in \R^n$.
The function $f_2$ is a composition of layers of the form
\begin{equation*}
  \sigma(W_{21}^{(\ell)}y^{(\ell-1)}+W_{22}^{(\ell)}x^{(\ell-1)}+b^{(\ell)}),
\end{equation*}
where $(y^{(\ell-1)},x^{(\ell-1)})$ is the output of the previous layer of the flow $f$. Consider the cost function
\begin{equation*}
  \mathcal{L}_1 = -\frac{1}{N}\sum_{i=1}^N \left(\log p(f_2(y_i,x_i)) + \log \det |\mathrm{d}_xf_2(y_i,x_i)|\right),
\end{equation*}
where now we simply write $p$ for the density of a Gaussian.
Optimizing this function gives an estimate of the conditional entropy $H(X\ | \ Y)$. If, on the other hand, we set the off-diagonal weights to zero and optimize the resulting function $\mathcal{L}_2$, we get an estimator for the entropy $H(X)$. This motivates Algorithm~\ref{alg:mi_auto}, which optimizes for $H(X\ | \ Y)$ and $H(X)$ simultaneously.

\begin{algorithm}[tb]
\caption{Normalizing Flows MI Estimation}\label{alg:mi_auto}
\begin{algorithmic}
\STATE {\bfseries Input: } data $(x_i,y_i)$
\STATE Initialize model parameters $\phi$.
\REPEAT
\STATE Draw minibatch $S$ of $M$ samples $\{(x_i,y_i)\}$
\STATE Evaluate:
    \begin{align*}
    \mathcal{L}_1 & = -\frac{1}{M}\sum_{(x,y)\in S}(\log p(f_2(y,x; \phi))+\log|\det \textnormal{d}_xf_2(y,x; \phi)|)
    \end{align*}
\STATE Update the parameters by gradients: $\phi=\phi-\nabla\mathcal{L}_1$
\STATE Deactivate the off-diagonal weights, call new parameters $\phi'$
\STATE Evaluate:
    \begin{align*}
    \mathcal{L}_2 & = -\frac{1}{M}\sum_{(x,y)\in S} (\log p(f_2(y,x; \phi'))+\log|\det \textnormal{d}_xf_2(y,x; \phi')|)
    \end{align*}
\STATE Update the parameters by gradients: $\phi'=\phi'-\nabla\mathcal{L}_2$
\UNTIL{Convergence}
\STATE {\bfseries Output: } $\hat{I}(X,Y)=\mathcal{L}_2-\mathcal{L}_1$
\end{algorithmic}
\end{algorithm}

Algorithm~\ref{alg:mi_auto} can be generalized to any normalizing flows with inner autoregressive structure between $X$ and $Y$. Compared with general autoregressive flows which usually model the autoregressive functions as conditioner neural networks, BNAF has not only the superior expressive power, but also the easy computation of the Jacobian matrix and the straightforward deactivation operation given by the block-wise matrix form of autoregressive functions. The theoretical justification based on universal approximation results for Block Autoregressive Flows is provided in Section B of the supplementary materials~\citep{ni2025_arxiv}.

\begin{figure*}[ht]
\centering
    \begin{subfigure}[b]{1.0\linewidth}
    \centering
    \includegraphics[scale=0.60]{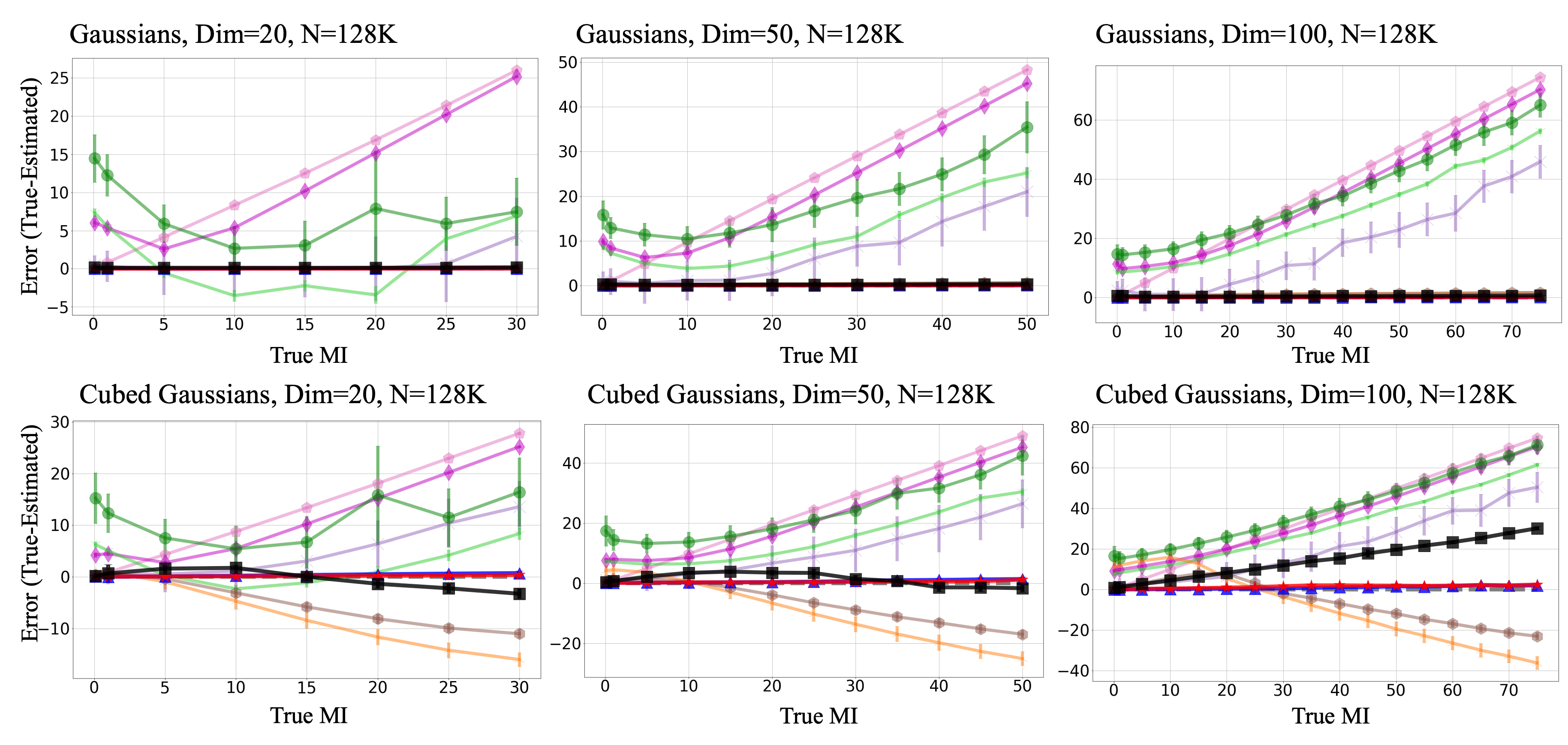}
    \end{subfigure}\\

    \begin{subfigure}[b]{1.0\linewidth}
    \centering
    \includegraphics[scale=0.65]{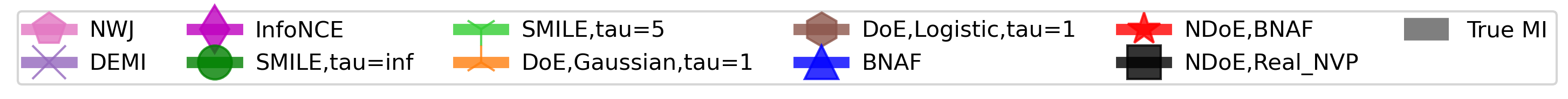}
    \end{subfigure}
    \hspace{5mm}
\caption{MI estimation between multivariate Gaussian variables (Top) and between multivariate Gaussian variables with a cubic transformation (Bottom). The size of training data are 128K. The estimation error $(I(x,y)-\hat{I}(x,y))$ are reported. Closer to zero is better.}

\label{Gaussian_N128K}
\end{figure*}   

\begin{figure*}[ht]
\centering
   \begin{subfigure}[b]{1.0\linewidth}
   \centering
    \includegraphics[scale=0.60]{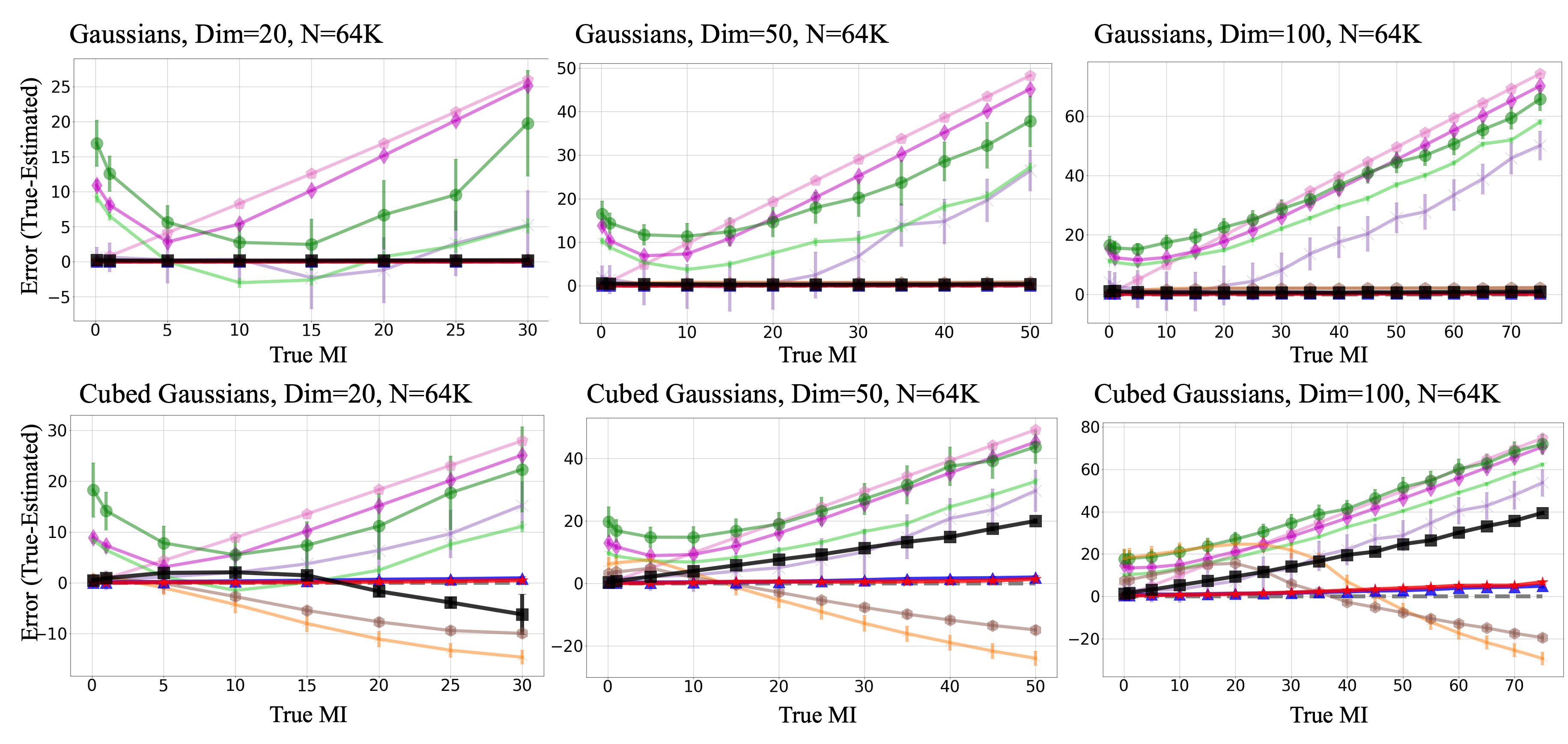}
    \end{subfigure}\\

    \begin{subfigure}[b]{1.0\linewidth}
    \centering
    \includegraphics[scale=0.65]{plots/legend_Gaussian.png}
    \end{subfigure}
    \hspace{5mm}
\caption{MI estimation between multivariate Gaussian variables (Top) and between multivariate Gaussian variables with a cubic transformation (Bottom). The size of training data are 64K. The estimation error $(I(x,y)-\hat{I}(x,y))$ are reported. Closer to zero is better.}

\label{Gaussian_N64K}
\end{figure*} 

\begin{figure*}[ht]
\centering
    \begin{subfigure}[b]{1.0\linewidth}
    \centering
    \includegraphics[scale=0.60]{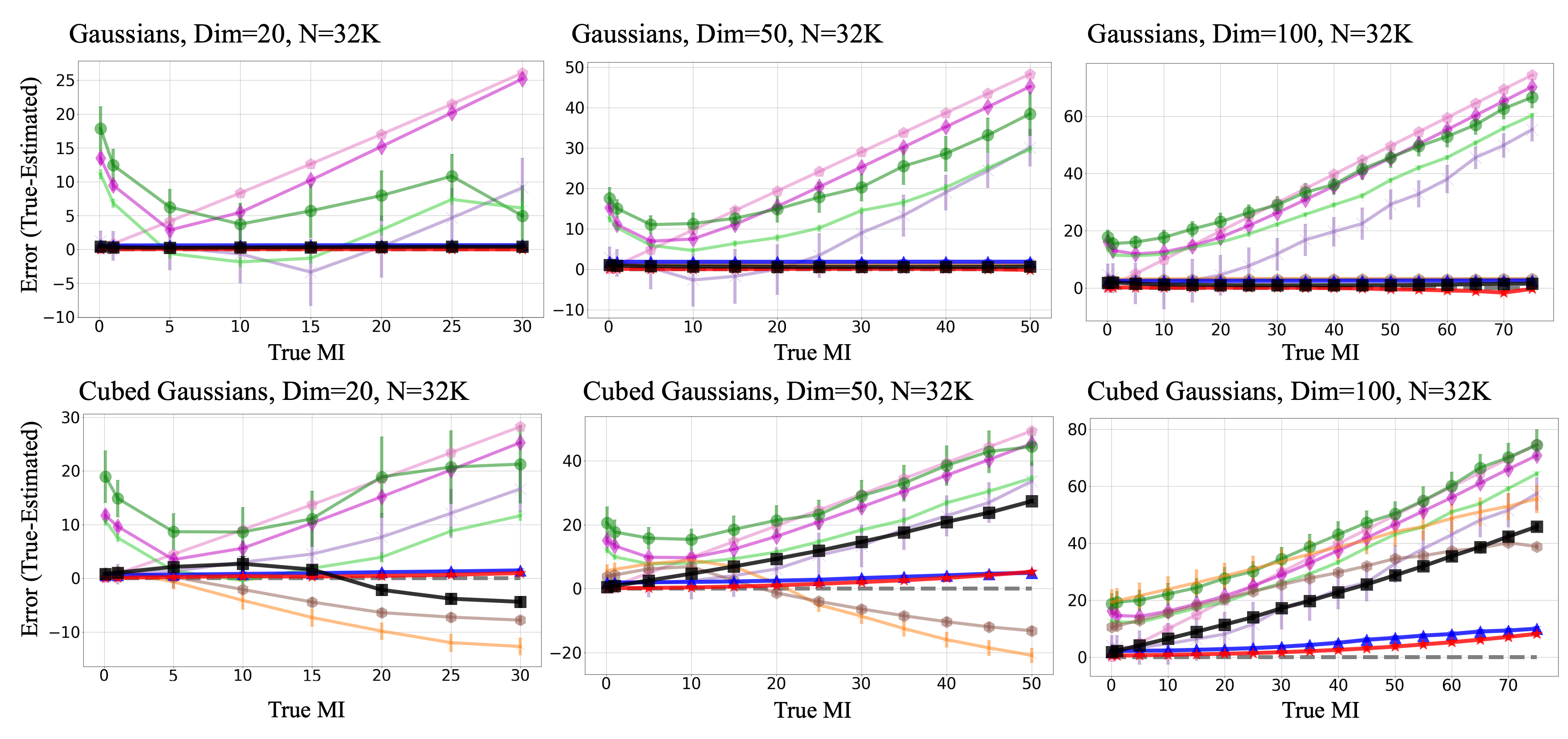}
    \end{subfigure}\\
    \begin{subfigure}[b]{1.0\linewidth}
    \centering
    \includegraphics[scale=0.65]{plots/legend_Gaussian.png}
    \end{subfigure}
    \hspace{5mm}
\caption{MI estimation between multivariate Gaussian variables (Top) and between multivariate Gaussian variables with a cubic transformation (Bottom). The size of training data are 32K. The estimation error $(I(x,y)-\hat{I}(x,y))$ are reported. Closer to zero is better.}

\label{Gaussian_N32K}
\end{figure*} 

\begin{figure*}[ht]
\centering
    \begin{subfigure}[b]{1.0\linewidth}
    \centering
    \includegraphics[scale=0.60]{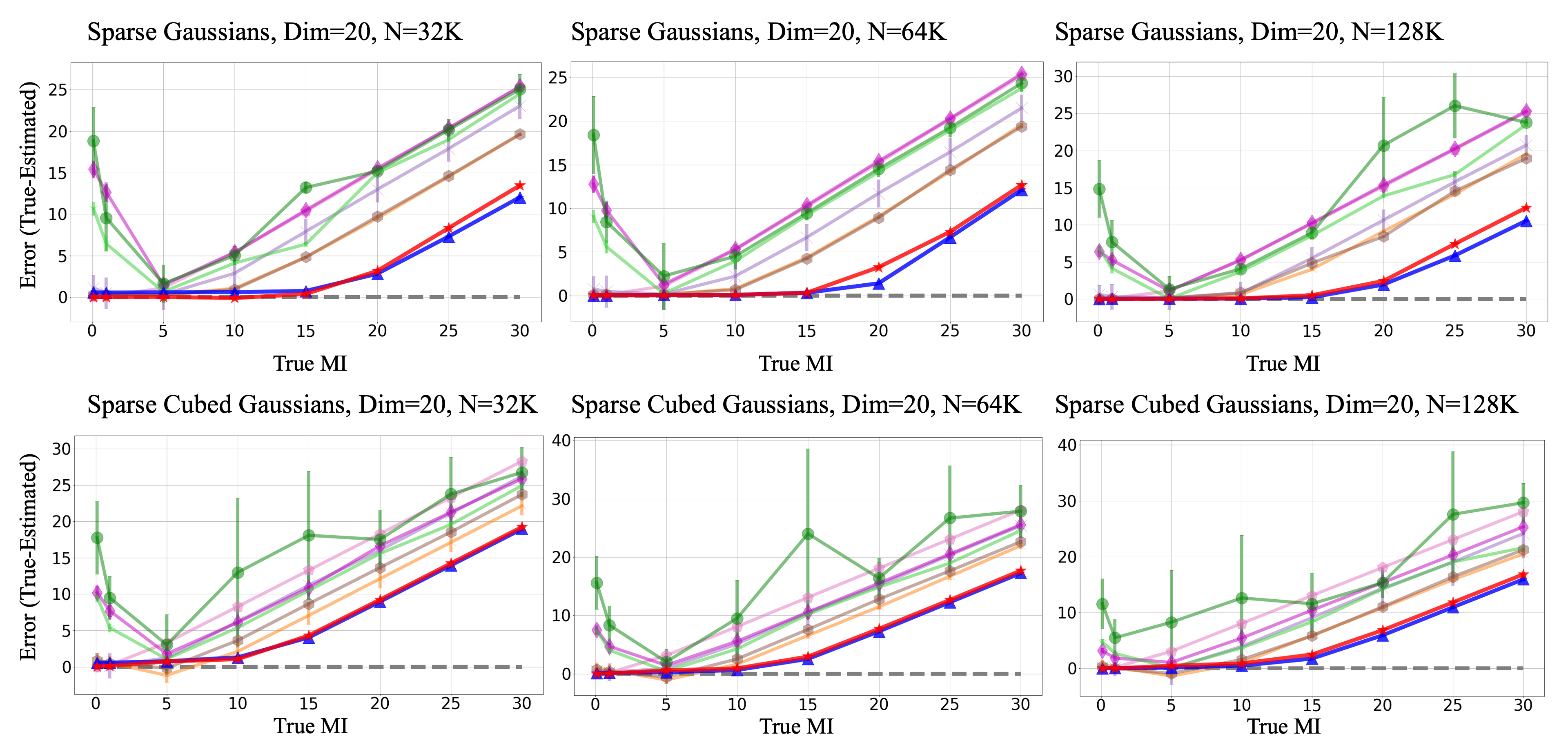}
    \end{subfigure}\\
    \begin{subfigure}[b]{1.0\linewidth}
    \centering
    \includegraphics[scale=0.65]{plots/legend_Gaussian.png}
    \end{subfigure}
    \hspace{5mm}
\caption{MI estimation between multivariate Sparse Gaussian variables (Top) and between multivariate Sparse Gaussian variables with a cubic transformation (Bottom). The size of training data are 128K. The estimation error $(I(x,y)-\hat{I}(x,y))$ are reported. Closer to zero is better.}
\label{Sparse_Gaussians}
\end{figure*}

\section{Numerical Experiments}\label{numerical}
We implemented several experimental settings from prior work \citep{mine,smine,variationalbounds,learningdeepreps,czyz2023beyond} to evaluate the performance of the proposed estimator. We first focus on the accuracy of the resulting estimates on synthetic Gaussian examples, where the true value of MI can be calculated analytically. In the Section C of supplementary materials~\citep{ni2025_arxiv}, we report on additional experiments on extremely small-sized datasets and non-Gaussian distributions. The final experiments are the long-run training behavior on the proposed estimator. All experiments were conducted on a computing cluster using Nvidia Quadro RTX 6000 GPUs. The implementation is available at: https://github.warwick.ac.uk/u1774790/nfmi.

\subsection{MI Estimation on Correlated Multivariate Gaussians}\label{4.1.0}
In this experiment, we sampled from two correlated Gaussian random variables $X$ and $Y$, for which the MI can be exactly obtained from their known correlation. The different MI estimators were trained on datasets with varying dimensionality of $X,Y$ ($20$-d, $50$-d and $100$-d), sample size ($32$K, $64$K and $128$K) and true MI to characterize the relative behaviors of every MI estimator. Additionally, we conduct an experiment by applying an element-wise cubic transformation on $y_i\rightarrow y_i^3$. This generates the non-linear dependencies in data without changing the ground truth of MI. The performance of trained estimators are evaluated on a different testing set of $10240$ samples. Czy{\.z} et al.~\citep{czyz2023beyond} mentioned that Gaussians with sparse interactions between $X$ and $Y$ could be a challenging benchmark for MI estimations. We then sample from Gaussians with $\textnormal{Cor}(X1,Y1), \textnormal{Cor}(X2,Y2) >0$ and there is no correlation between any other (distinct) variables. We named it Sparse Gaussian as the covariance matrix $Cov(X,Y)$ is a sparse matrix in this case. We assessed our methods along with the following baselines: 1. \textbf{DEMI}~\citep{demi}, with the parameter $\alpha=0.5$. 2. \textbf{SMILE}~\citep{smine}, with three clipping parameters $\tau\in\{5.0, \infty\}$. For $\tau=\infty$, it is equivalent to the MINE~\citep{mine}; 3. \textbf{InfoNCE}~\citep{Infonce}, which is the method in contrastive predictive coding (CPC); 4. \textbf{NWJ}~\citep{NWJ}, which is the method based on estimating the likelihood ratios by convex risk minimization;  5. \textbf{DoE}~\citep{limit_mi}, the DoE method, where the distributions is parameterized by isotropic Gaussian (correct) or logistic (misspecified), with three parameters $\tau=1.0$ that clips the gradient norm in training; 6. \textbf{BNAF}, approximating the entropies respectively in the MI using two separate Block Neural Autoregressive Flows; 7. \textbf{NDoE, BNAF}, the proposed method with BNAF structure; 8. \textbf{NDoE, Real NVP}, the proposed method with Real NVP structure. The implementation example is provided in Section A of supplementary materials~\citep{ni2025_arxiv}. We noticed that the comparison between our method, as a generative model, and other discriminative methods can be difficult since the neural network structure and the model parametrizations are different. To make the comparison as fair as possible, we used the same neural network architecture for all discriminative methods, which is a multi-layer perceptron with an initial concatenation layer, two fully connected layers with $512$ hidden units for each layer and ReLU activations, and a linear layer with a single output. In terms of our proposed method, we constructed the flow with 2 BNAF transformation layers and tanh activations, and a linear BNAF layer to reset the dimensionality. The BNAF layers use $20\times20$-d, $10\times50$-d, $6\times100$-d hidden dimensions for $20$-d, $50$-d and $100$-d data respectively, which is roughly the same as the 512 hidden units in discriminative methods. For Real NVP layers, we let each of the scale and translation functions be two layers multi-layer perceptron with $128$ hidden units for each layer and ReLU activations. Each MI estimator was trained for $50$ epochs with a mini-batch of $128$. Due to the vanishing and exploding gradient issues in Real NVP, we applied the Adamax optimizer with a fine-tuned learning rate. For all other optimizations, the Adam optimizer with a learning rate of 0.0005 was used. All results were computed over 10 runs on the testing sets generated with different random seeds to ensure robustness and generalizability.\\

\textbf{Results.} The results for a sample size of 128K are shown in Figure~\ref{Gaussian_N128K}, while the results for sample sizes of 64K and 32K are presented in Figure~\ref{Gaussian_N64K} and Figure~\ref{Gaussian_N32K}, respectively. The Sparse Gaussian results for the 20-dimensional case are plotted in Figure~\ref{Sparse_Gaussians}. Overall, all the discriminative methods tend to underestimate MI. This issue does not occur in our proposed flow-based models for Gaussian variables, likely due to the fact that the base distribution is itself Gaussian. For the cubic Gaussian case, the underestimation is much milder compared to other methods, though the underestimating bias increases as the true MI becomes larger. 

Among all the methods, our proposed model achieved better performance across different dimensionalities and sample sizes. While \textbf{DoE} methods performed well for Gaussian variables, they exhibited a large bias when applied to cubic Gaussians. When comparing \textbf{NDoE, BNAF} with \textbf{BNAF}, we observed that for smaller sample sizes (or insufficient training steps), \textbf{BNAF} exhibits a slight bias across all true MI values. Additionally, for cubic cases, \textbf{BNAF} shows a larger bias when MI is close to zero, an issue not observed with \textbf{NDoE, BNAF}. The work by \citet{smine} attributed this as a shortcoming of generative models, but we believe our proposed method mitigates this issue, as the bias in entropy estimation vanishes by approximating entropies using the same neural network.

In the cubic cases, \textbf{NDoE, Real NVP} exhibits a larger bias, though it still outperforms discriminative methods, particularly when the sample size is sufficiently large. In the 20-dimensional Gaussian case, \textbf{SMILE} occasionally overestimated MI, which we will further analyze in the long-run training experiments. For the Sparse Gaussian case, both \textbf{NDoE, BNAF} showed small biases when the true MI is small, and \textbf{BNAF} outperformed \textbf{NDoE, BNAF} for larger MI. Both methods consistently outperformed other approaches across different sample sizes. However, \textbf{NDoE, Real NVP} failed to achieve realistic results in the Sparse Gaussian case. This may be due to the Real NVP architecture capturing fewer intrinsic dependencies between $X$ and $Y$.

\section{Conclusions}\label{sec5*_Conclusions}
In this research, we proposed a new MI estimator which is based on the block autoregressive flow structure and the difference-of-entropies (DoE) estimator. Theoretically, our method converges to true mutual information as the number of samples increases and with large-enough neural network capacity.
The accuracy of our estimator then depends on the ability of the block autoregressive flows in predicting the true posterior probability of items in the test set. A theoretical analysis is provided in Section B of supplementary materials~\citep{ni2025_arxiv}.
We discussed the connections and differences between our approach and other approaches, including the lower bound approaches of MINE and SMILE, InfoNCE (CPC), and the classifier-based approaches of CCMI and DEMI. We also demonstrate empirical advantages of our approach over the state-of-the-art methods for estimating MI in synthetic data. Given its simplicity and promising performance, we believe that our method is a good candidate for use in research that optimizes MI. In future work, we aim to expand our experiments to additional data, including the image-like data of~\citet{butakov2024mutual} and the recently published benchmarks in~\citet{lee2024benchmark}.

Despite its promising results, the proposed method has limitations in its current form. As a method that depends on the particular neural network architecture used to implement the flows, care needs to be taken to ensure the stability of the proposed estimator and its performance on smaller data sets. As seen in the experiments, the method performs particularly well in cases where the random quantities are based on Gaussian distribution. In future work we aim to explore the possibilities of using different classes of base distributions. This includes the possibility of dealing with discrete distributions, a situation that is handled well by critic-based methods~\citep{mine,Infonce}.
Another direction involves evaluating our method in view of downstream applications that require the computation of mutual information and comparing its performance in these settings with other generative approaches that were recently introduced~\citep{minde,duong2023diffeomorphic,butakov2024mutual}. In particular, in light of recent work~\citep{kong2023interpretable, minde}, it would be interesting to explore multimodal examples, such as the MI between image data and text embeddings. 

\begin{ack}
The first author was funded by a China Scholarship Council scholarship.
\end{ack}



\bibliography{mybibfile}
\clearpage
\newpage

\appendix

\section{Real NVP}
An alternative architecture for implementing our approach is based on Real NVP. Real NVP, proposed by \citet{RealNVP}, is a class of normalizing flows constructed using simple and flexible bijections with efficient computation of the Jacobian determinant. Each transformation in Real NVP is referred to as an affine coupling layer. Given a $d$-dimensional input $x$ and a partition point $d_m < d$, the output $y$ of an affine coupling layer is defined by the following equations:
\begin{equation*}
        y_{1:d_m} = x_{1:d_m},\ y_{d_m+1:d} = x_{d_m+1:d} \odot \exp(s(x_{1:d_m})) + t(x_{1:d_m}),
\end{equation*}
where $s$ and $t$ represent the scale and translation functions, mapping $\R^{d_m} \to \R^{d-d_m}$, and $\odot$ denotes the Hadamard (element-wise) product. The Jacobian matrix of this transformation is given by:
\begin{equation*}
    \begin{bmatrix}
        \mathbb{I}_{d_m} & 0 \\
        \frac{\partial y_{d_m+1:d}}{\partial x_{1:d_m}^T} & \textnormal{diag}(\exp(s(x_{1:d_m})))
    \end{bmatrix},
\end{equation*}
where $\textnormal{diag}(\exp(s(x_{1:d_m})))$ is a diagonal matrix whose diagonal elements correspond to the vector $\exp(s(x_{1:d_m}))$. As the Jacobian is triangular, its determinant can be efficiently computed as $\exp(\sum_j s(x_{1:d_m})_j)$. Additionally, since the computation does not depend on the Jacobian of $s$ and $t$, arbitrarily complex functions can be used for $s$ and $t$. A common choice is deep convolutional neural networks with more features in the hidden layers than in the input and output layers.
\begin{example}\label{ex2:nf}
Consider a function $f\colon \R^4\to \R^4$, given as a composition $f=f^2\circ f^1\circ f^0$ with $f^0,f^1,f^2$. Hence, the corresponding neural network has the form shown in Figure~\ref{fig2:nn}.
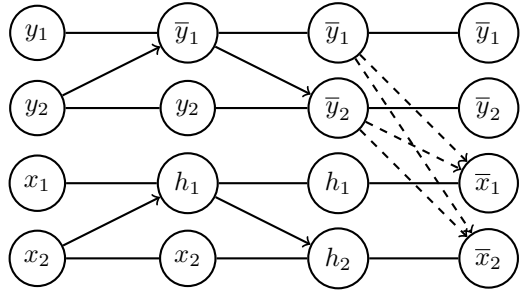
\begin{figure}[!ht]
\centering
\begin{tikzpicture}[thick, scale=1.0]
  \node [every neuron] (I1) at (-4,1.5) {$y_1$};
  \node [every neuron] (I2) at (-4,0.5) {$y_2$};
  \node [every neuron] (I3) at (-4,-0.5) {$x_1$};
  \node [every neuron] (I4) at (-4,-1.5) {$x_2$};
  \node [every neuron] (M1) at (-2,1.5) {$\overline{y}_1$};
  \node [every neuron] (M2) at (-2,0.5) {$y_2$};
  \node [every neuron] (M3) at (-2,-0.5) {$h_1$};
  \node [every neuron] (M4) at (-2,-1.5) {$x_2$};
  \node [every neuron] (N1) at (0,1.5) {$\overline{y}_1$};
  \node [every neuron] (N2) at (0,0.5) {$\overline{y}_2$};
  \node [every neuron] (N3) at (0,-0.5) {$h_1$};
  \node [every neuron] (N4) at (0,-1.5) {$h_2$};
  \node [every neuron] (O1) at (2,1.5) {$\overline{y}_1$};
  \node [every neuron] (O2) at (2,0.5) {$\overline{y}_2$};
  \node [every neuron] (O3) at (2,-0.5) {$\overline{x}_1$};
  \node [every neuron] (O4) at (2,-1.5) {$\overline{x}_2$};
  \draw[-] (I1) -- (M1);
  \draw[-] (I2) -- (M2);
  \draw[-] (I3) -- (M3);
  \draw[-] (I4) -- (M4);  
  \draw[->] (I2) -- (M1);
  \draw[->] (I4) -- (M3);
  \draw[-] (M1) -- (N1);
  \draw[-] (M2) -- (N2);
  \draw[-] (M3) -- (N3);
  \draw[-] (M4) -- (N4);  
  \draw[->] (M1) -- (N2);
  \draw[->] (M3) -- (N4);
  \draw[-] (N1) -- (O1);
  \draw[-] (N2) -- (O2);
  \draw[-] (N3) -- (O3);
  \draw[-] (N4) -- (O4);  
  \draw[->,dashed] (N1) -- (O3);
  \draw[->,dashed] (N2) -- (O3);
  \draw[->,dashed] (N1) -- (O4);
  \draw[->,dashed] (N2) -- (O4);
\end{tikzpicture}
 \hspace{5mm}
\caption{A Real NVP Flow $f(y,x)$. Solid lines represent identical units. Arrows represents the affine coupling transformations.}\label{fig2:nn}
 \hspace{150mm}
\end{figure}
\end{example}
\section{Theoretical results}
The flows we consider to implement our MI estimator are based on the factorization
\begin{equation*}
  p(x,y) = p(x \ | \ y) \cdot p(y),
\end{equation*}
with $x\in \R^n$, $y\in \R^n$. The corresponding flows are implemented by block-triangular maps. 
We call $T\colon \R^n\to \R^n$ triangular if the following conditions are satisfied for $i\in \{1,\dots,n\}$:

\begin{enumerate}
\item $T_i(x)=g_i(x_1,\dots,x_i)$ (in particular, $\partial T_i/\partial x_j=0$ for $j>i$);
\item For any $x_{1:i-1}=(x_1,\dots,x_{i-1})$, $T_i(x_{1:i-1},x_i)$ is monotonically increasing as a function of $x_i$.
\end{enumerate}

It is easy to see that triangular maps are invertible. It is a well-known result that normalizing flows can be implemented as triangular maps~\citep{triangular}. In our setting, we are interested in block-triangular maps of a specific form. We call a map $T\colon \R^{2n}\to \R^{2n}$ block triangular, if $T=(T_1(y),T_2(y,x))$, where each $T_i\colon \R^n\to \R^n$ and $\det \mathrm{d}T_1(y)>0$, as well as $\det \mathrm{d}_yT_2(y,x)>0$ for each $y$ and $x$. While in our case we restrict to the case where $x$ and $y$ have the same dimension, everything that follows generalizes to different dimensions. Recall that if $T\colon \R^{2n}\to \R^{2n}$ is a map and $\mu$ is a probability measure on $\R^{2n}$, then the push-forward measure is given by
\begin{equation*}
  T_*\mu(A) = \mu(T^{-1}(A)).
\end{equation*}
Moreover, if $\mu$ has density $p(x,y)$ and we denote by $q$ the density of the push-forward measure, then
\begin{equation*}
  q(T(y,x))\cdot |\det \mathrm{d}T(y,x)| = p(y,x).
\end{equation*}
In particular, if $T$ is a normalizing flow, then $q$ is a Gaussian density. In particular, if $T$ is block-triangular as above, then 
\begin{equation*}
|\det \mathrm{d}T(y,x)|=|\det \mathrm{d}_yT_1(y)|\cdot  |\det \mathrm{d}_xT_2(y,x)|
\end{equation*}
and the normalizing flow corresponds to the factorization
\begin{align*}
  p(y) &= q(T_1(y)) \cdot |\det \mathrm{d}_yT_1(y)|\\
  p(x \ | \ y) &= q(T_2(y,x)) \cdot |\det \mathrm{d}_x T_2(y,x)|,
\end{align*}
where again $q$ is a Gaussian density on $\R^n$. It is not a priori clear how this approach leads to a normalizing flow for $p(x)$, say
\begin{equation*}
  p(x) = q(T_x(x)) \cdot |\det \mathrm{d}_xT_x(x)|,
\end{equation*}
in such a way that $T_2(y,x)$ can be built from this. The following theorem answers this question.

\begin{theorem}
Let $\mu,\nu$ be absolutely continuous probability measures on $\R^{2n}$ and let $\mu_1,\mu_2,\nu_1,\nu_2$ denote the marginal distributions on the first and last $n$ coordinate, respectively. Let $T_1\colon \R^n\to \R^n$, $T_2\colon \R^n\to \R^n$ be maps such that
\begin{equation*}
  (T_1)_*\mu_1=\nu_1, \quad (T_2)_*\mu_2=\nu_2.
\end{equation*}
Then there exists a map $T\colon \R^{2n}\to \R^{2n}$ such that $T_*\mu=\nu$ and
\begin{equation*}
  T(y,x) = (T_1(y),\overline{T}(y,T_2(x)))
\end{equation*}
for a suitable map $\overline{T}\colon \R^{2n}\to \R^n$. 
\end{theorem}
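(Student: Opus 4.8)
The plan is to build $T$ by disintegrating $\mu$ and $\nu$ along their first marginals, reducing the problem to a family of one-shot transport problems on $\R^n$ parametrized by $y$, each of which is solvable because the measures involved are absolutely continuous; the solutions will then assemble into a map of the required block form. Concretely, I would first write the regular conditional decompositions $\mu(\mathrm{d}y,\mathrm{d}x)=\mu_1(\mathrm{d}y)\,\mu_{x\mid y}(\mathrm{d}x)$ and $\nu(\mathrm{d}y',\mathrm{d}x')=\nu_1(\mathrm{d}y')\,\nu_{x'\mid y'}(\mathrm{d}x')$, which exist since $\R^{2n}$ is Polish, and note that absolute continuity of $\mu$ and $\nu$ makes the conditionals $\mu_{x\mid y}$ and $\nu_{x'\mid y'}$ absolutely continuous on $\R^n$ for $\mu_1$- and $\nu_1$-almost every $y$ and $y'$. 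Unwinding the definition of the push-forward on product sets $A\times B$ shows that any map of the form $T(y,x)=(T_1(y),\overline{T}(y,T_2(x)))$ satisfies $T_*\mu=\nu$ as soon as, for $\mu_1$-almost every $y$,
\begin{equation*}
  \bigl(\overline{T}(y,\cdot)\bigr)_*\,\bigl((T_2)_*\mu_{x\mid y}\bigr)\;=\;\nu_{x'\mid T_1(y)} .
\end{equation*}
Here the hypothesis $(T_1)_*\mu_1=\nu_1$ is used to match the first marginal and to transfer $\nu_1$-a.e.\ genericity of $y'$ to $\mu_1$-a.e.\ genericity of $y$; the hypothesis $(T_2)_*\mu_2=\nu_2$ is not logically required for existence, but it is exactly what makes $T_2$ the normalizing flow of the $X$-marginal, the case of interest.

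For each such $y$ I would then define $\overline{T}(y,\cdot)$ to be any transport map sending $(T_2)_*\mu_{x\mid y}$ to $\nu_{x'\mid T_1(y)}$. The target is absolutely continuous by the previous step, and the source is absolutely continuous whenever $T_2$ is a diffeomorphism, which is the relevant situation for normalizing flows (for a general measurable $T_2$ it suffices that $T_2$ not collapse a set of positive $\mu_{x\mid y}$-measure onto a point, so that the source is at least non-atomic). Between two absolutely continuous probability measures on $\R^n$ a transport map always exists; I would take the increasing triangular (Knothe--Rosenblatt) map, both because it is canonical and because it keeps the construction inside the class of triangular maps used elsewhere in this section (cf.~\citep{triangular}). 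Setting $\overline{T}(y,z)$ equal to the value of this map at $z$ produces the required $\overline{T}\colon\R^{2n}\to\R^{n}$, and $T(y,x)=(T_1(y),\overline{T}(y,T_2(x)))$ then satisfies $T_*\mu=\nu$, since the displayed identity holds for $\mu_1$-almost every $y$ and product sets generate the Borel $\sigma$-algebra of $\R^{2n}$.

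The one genuinely delicate step, which I expect to be the main obstacle, is measurability: I must check that $(y,z)\mapsto\overline{T}(y,z)$ is jointly Borel, so that $T$ is a bona fide measurable map rather than merely a pointwise-in-$y$ recipe. This I would derive from the measurable dependence of the disintegrations $y\mapsto\mu_{x\mid y}$ and $y'\mapsto\nu_{x'\mid y'}$ on the conditioning variable, combined with the explicit description of the Knothe--Rosenblatt map through one-dimensional conditional distribution functions and their generalized inverses, each of which is a measurable operation on the conditional laws; alternatively, a measurable-selection theorem applied to the multifunction $y\mapsto\{\,R\colon R_*((T_2)_*\mu_{x\mid y})=\nu_{x'\mid T_1(y)}\,\}$ suffices. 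Everything that remains — the push-forward identity on product sets and the verification of the block form — is a routine computation.
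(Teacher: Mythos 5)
Your proof is correct, and it arrives at the same block-triangular map as the paper, but by a more hands-on route. The paper's argument is two lines: set $U(y,x)=(y,T_2(x))$, push $\mu$ forward to $\overline{\mu}=U_*\mu$, and invoke \citet{triangular} to obtain a triangular transport $V=(T_1,\overline{T})$ of $\overline{\mu}$ onto $\nu$, so that $T=V\circ U$ has the stated form. You instead reprove the existence of that $V$ directly: disintegrate $\mu$ and $\nu$ over their first marginals, reduce $T_*\mu=\nu$ to the fiberwise condition $\bigl(\overline{T}(y,\cdot)\bigr)_*\bigl((T_2)_*\mu_{x\mid y}\bigr)=\nu_{x'\mid T_1(y)}$ for $\mu_1$-a.e.\ $y$, and solve each fiber problem with a Knothe--Rosenblatt map. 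This buys you three things the paper glosses over: first, the canonical triangular map of \citet{triangular} comes with its own first component, whereas the theorem asks for the \emph{given} $T_1$; your fiberwise condition shows that any $T_1$ with $(T_1)_*\mu_1=\nu_1$ can be completed, which is the step the paper's citation silently absorbs. Second, you correctly observe that the hypothesis $(T_2)_*\mu_2=\nu_2$ is not needed for existence, only for the intended interpretation of $T_2$ as the marginal flow. Third, you flag that the source fibers $(T_2)_*\mu_{x\mid y}$ must be absolutely continuous (or at least non-atomic) for the fiberwise transport to exist as a map, which holds when $T_2$ is a diffeomorphism; the paper carries the same implicit restriction, since $U_*\mu$ must satisfy the hypotheses of the cited theorem. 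The joint measurability of $(y,z)\mapsto\overline{T}(y,z)$, which you rightly single out as the delicate point, is precisely the work that the citation to \citet{triangular} is doing in the paper's version, so your sketch of how to obtain it (measurable dependence of the conditional distribution functions on $y$, or a measurable selection) is the right thing to supply.
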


We note that the existence of $T_1$ and $T_2$ as in the theorem follows from~\citet{triangular} (they may even be chosen to be triangular maps).

\begin{proof}
 Consider the map
\begin{equation*}
  U\colon \R^{2n}\to \R^{2n}, \quad U(y,x) = (y,T_2(x)).
\end{equation*}
Let $\overline{\mu}=U_*\mu$. By~\citet{triangular}, we know that there exists a triangular transformation $V=(T_1,\overline{T})\colon \R^{2n}\to\R^{2n}$, with $\overline{T}\colon \R^n\to \R^n$ and $V_*\overline{\mu}=\nu$. It follows that 
\begin{equation*}
  \nu = V_*\overline{\mu}=V_*U_*\mu = (V\circ U)_*\mu.
\end{equation*} 
Since $V\circ U=T$ as stated in the theorem, the claim follows.
\end{proof}

\begin{corollary}\label{cor:joint}
Let $p$ be the joint density of $(X,Y)$ and consider the factorization
\begin{equation*}
  p(x,y) = p_Y(y) \cdot p_{X|Y}(x|y).
\end{equation*}
Let $f_{(x)}\colon \R^n\to \R^n$ be a normalizing flow for the marginal density $p_X(x)$. 
There exists a block-triangular map $f=(f_1,f_2)$ that is a normalizing flow for $p(x,y)$, 
where $f_2(y,x)=\overline{f}(y,f_{(x)}(x))$ for a map $\overline{f}\colon \R^{2n}\to \R^n$, and such that
\begin{equation*}
  p(x|y) = q(\overline{f}(y, f_{(x)}(x))) \cdot |\det \mathrm{d}_x f_2(y,x)|,
\end{equation*}
where $q$ is a Gaussian density.
\end{corollary}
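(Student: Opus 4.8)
The plan is to deduce the corollary directly from the Theorem by specializing the abstract push-forward construction to the case where the target measure is Gaussian, and then reading off densities via the change-of-variables formula. First I would apply the Theorem with $\mu$ the joint law of $(Y,X)$ on $\R^{2n}$, so that $\mu_1$ is the law of $Y$ (density $p_Y$) and $\mu_2$ is the law of $X$ (density $p_X$), and with $\nu$ the standard Gaussian on $\R^{2n}$, whose marginals $\nu_1=\nu_2$ are the standard Gaussian on $\R^n$. For $T_2$ I take the given flow $f_{(x)}$, which by hypothesis satisfies $(f_{(x)})_*\mu_2=\nu_2$; for $T_1$ I take a normalizing flow $f_1$ for $p_Y$, whose existence follows from~\citet{triangular}. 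The Theorem then yields a map $f=(f_1,f_2)$ with $f_*\mu=\nu$ and $f_2(y,x)=\overline f(y,f_{(x)}(x))$ for some $\overline f\colon\R^{2n}\to\R^n$; since $\nu$ is Gaussian, this $f$ is a normalizing flow for $p(x,y)$.

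Next I would check the block-triangular form. Following the proof of the Theorem, write $f=V\circ U$ with $U(y,x)=(y,f_{(x)}(x))$ and $V=(f_1,\overline f)$ a triangular map; then $\mathrm{d}f=\mathrm{d}V\cdot\mathrm{d}U$ is block lower triangular, $\mathrm{d}_x f_2(y,x)$ is the product of the lower-right block of $\mathrm{d}V$ with $\mathrm{d}f_{(x)}(x)$, and consequently $|\det\mathrm{d}f(y,x)|=|\det\mathrm{d}f_1(y)|\cdot|\det\mathrm{d}_x f_2(y,x)|$. Taking $f_1$ and $f_{(x)}$ to be triangular (hence coordinate-wise monotone) makes the relevant determinants positive, so $f$ is block triangular in the sense defined earlier.

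Finally I would extract the conditional density. Since $f$ is a normalizing flow for $p(x,y)$, the change-of-variables identity gives $p(y,x)=q_{2n}(f(y,x))\,|\det\mathrm{d}f(y,x)|$, where $q_{2n}$ is the standard Gaussian density on $\R^{2n}$; writing $q_{2n}(u,v)=q(u)\,q(v)$ for $q$ the standard Gaussian on $\R^n$ and using the Jacobian factorization above gives
\begin{equation*}
  p(y,x)=\bigl(q(f_1(y))\,|\det\mathrm{d}f_1(y)|\bigr)\cdot\bigl(q(f_2(y,x))\,|\det\mathrm{d}_x f_2(y,x)|\bigr).
\end{equation*}
Because $(f_1)_*\mu_1=\nu_1$, the first bracketed factor equals $p_Y(y)$; dividing by it and comparing with $p(x,y)=p_Y(y)\,p_{X|Y}(x|y)$ yields $p(x|y)=q(\overline f(y,f_{(x)}(x)))\,|\det\mathrm{d}_x f_2(y,x)|$, which is the assertion. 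The one place that requires care is the positivity needed in the second paragraph: to land in the strict class of block-triangular maps one must use the triangular representatives supplied by~\citet{triangular} (or absorb a coordinate-wise sign flip into $f_{(x)}$), since an arbitrary normalizing flow need not be orientation-preserving; apart from that, the argument is a routine unwinding of the change-of-variables formula.
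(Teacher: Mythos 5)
Your proposal is correct and follows essentially the same route as the paper: the corollary is obtained by specializing the Theorem to a Gaussian target with $T_2=f_{(x)}$ and $T_1$ a flow for $p_Y$, factorizing the block-triangular Jacobian of $V\circ U$, and reading off $p_Y$ and $p_{X|Y}$ from the change-of-variables identity exactly as in the display preceding the Theorem. Your closing remark about using the increasing triangular representatives of~\citet{triangular} to guarantee positive Jacobian determinants is a sensible point of care that the paper leaves implicit.
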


The idea is to implement such a flow using neural networks, in such a way that by deactivating
a certain part of the neural network for $f_2(y,x) = \overline{f}(y,f_{(x)}(x))$, we get the flow $f_{(x)}$. 
In analogy to Block Neural Autoregressive Flows, we now consider 

\begin{align*}
  f\colon \R^{2n}\to \R^{2n}, \quad f(y,x) &= (f_1(y,x), f_2(y,x))^T\\
  & = f^{\ell}\circ \cdots \circ f^{1}, 
\end{align*}

where $f_1\colon \R^{2n}\to \R^n$ and $f_2\colon \R^{2n}\to \R^n$. We assume that each $f^k$, $1\leq k<\ell$, is of the form

\begin{equation*}
  f^k({y},{x}) = \sigma \left(\begin{pmatrix} g({B}^k_{11}) & {0}\\ {B}^k_{21} & g({B}^k_{22}) \end{pmatrix} \begin{pmatrix} {y} \\ {x}\end{pmatrix} + \begin{pmatrix} {b}^k_1\\ {b}^k_2\end{pmatrix}\right) 
\end{equation*} 

with ${B}^k_{ij}\in \R^{m_{k}\times m_{k-1}}$, $g=\exp$ and $m_0=d$. For $k=\ell$ we omit the activation function and set $m_k=d$. The following basic result is shown along the lines of~\citet{BNAF}. 

\begin{theorem} The Jacobian $\mathrm{d}f$ is a $2\times 2$ block triangular matrix with $d\times d$ blocks,
\begin{equation*}
  \mathrm{d}f({y},{x}) = \begin{pmatrix} \mathrm{d}_{{y}}f_1({y}) & {0}\\ \mathrm{d}_{{y}}f_2({y},{x}) & \mathrm{d}_{{x}}f_2({y},{x}) \end{pmatrix}.
\end{equation*}
Moreover, if $\sigma$ is strictly increasing, then $\det \mathrm{d}_{{y}}f_1({y})>0$ and $\det \mathrm{d}_{{x}}f_2({y},{x}) > 0$ for all $({y},{x})$. 
\end{theorem}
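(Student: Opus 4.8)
The plan is to verify the claimed block structure of the Jacobian by tracking, layer by layer, which input variables each output coordinate can depend on, and then to read off the sign of the diagonal blocks from the chain rule. The key observation, exactly as in the B-NAF analysis, is that each layer map $f^k(y,x)=\sigma(W^k(y,x)^T+b^k)$ has a weight matrix $W^k$ that is block lower triangular: the top block of outputs depends only on the $y$-part of the input (through $g(B^k_{11})$), while the bottom block depends on both. First I would show by induction on $k$ that after composing $f^k\circ\cdots\circ f^1$, the first $m_k$ coordinates are functions of $y$ alone and the last $m_k$ coordinates are functions of $(y,x)$; the base case is the structure of $f^1$ and the inductive step is immediate because composing two block-lower-triangular maps (in this variable-dependence sense) again yields a block-lower-triangular map. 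Taking $k=\ell$ gives $f(y,x)=(f_1(y),f_2(y,x))$, which forces $\mathrm{d}f$ to have the stated $2\times 2$ block-triangular shape with the upper-right block equal to $0$ and the lower-left block $\mathrm{d}_y f_2$.

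Next I would compute the two diagonal blocks via the chain rule. For $\mathrm{d}_y f_1(y)$, only the $g(B^k_{11})$ blocks contribute, so $\mathrm{d}_y f_1(y) = \prod_{k=\ell}^{1} D^k \cdot g(B^k_{11})$, where for $k<\ell$ the factor $D^k$ is the diagonal matrix $\mathrm{diag}(\sigma'(\cdot))$ evaluated at the pre-activation of layer $k$ (and $D^\ell = I$ since the last layer has no activation). If $\sigma$ is strictly increasing then $\sigma'>0$, so each $D^k$ has positive determinant; each $g(B^k_{11})=\exp(B^k_{11})$ is an entrywise-exponential matrix with strictly positive entries — but more to the point, I should not claim $\det\exp(B^k_{11})>0$ from positivity of entries alone (that is false in general). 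Instead the relevant structure is that in the full B-NAF-style construction the intermediate dimensions factor through one-dimensional blocks so that the diagonal of the overall Jacobian is a product of the scalar quantities $\prod_k g(B^k_{ii})$, each manifestly positive. I would therefore phrase the sign argument exactly as in Section~\ref{sec:block}: the relevant diagonal entries of $\mathrm{d}_y f_1$ are products of $\sigma'$-values (positive) and $\exp$-values (positive), hence positive, so $\det \mathrm{d}_y f_1(y)>0$. The identical argument applied to the $x$-block (using $g(B^k_{22})$ in place of $g(B^k_{11})$) gives $\det \mathrm{d}_x f_2(y,x)>0$.

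The main obstacle — and the only place needing care — is making the determinant-positivity argument honest. Writing $\mathrm{d}_y f_1 = \prod_k D^k g(B^k_{11})$ and concluding $\det>0$ requires knowing that each factor has positive determinant, which is clear for the diagonal $D^k$ but requires the correct reading of the architecture for the $g(B^k_{11})$ factors: one must use that, as in B-NAF, these reduce (after accounting for the internal block/masking structure) to products of positive scalars on the diagonal, rather than appealing to the (incorrect) claim that an entrywise-positive matrix has positive determinant. Once this is set up correctly, the positivity of $\det \mathrm{d}_y f_1$ and $\det \mathrm{d}_x f_2$ follows, and combined with the block-triangular shape this also shows $\det \mathrm{d}f(y,x) = \det\mathrm{d}_y f_1(y)\cdot \det\mathrm{d}_x f_2(y,x) > 0$, so $f$ is invertible, completing the proof.
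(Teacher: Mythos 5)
Your argument is correct and follows the route the paper intends: the paper gives no written proof of this theorem, stating only that it is shown along the lines of \citet{BNAF}, and your layer-by-layer induction for the block-triangular shape together with the chain-rule factorization of the diagonal blocks is exactly that argument. The substantive point you add---and it is worth adding---is that positivity of $\det \mathrm{d}_{y}f_1$ and $\det \mathrm{d}_{x}f_2$ does \emph{not} follow from the coarse $2\times 2$ block description in the appendix alone: a square matrix with strictly positive entries (such as $g(B^k_{11})=\exp(B^k_{11})$ applied entrywise) can have negative determinant, and a product of entrywise-positive factors $D^k g(B^k_{11})$ carries no sign constraint on its determinant in general. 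The claim does hold once one retains, inside each of the two large blocks, the finer per-coordinate block structure of Section~\ref{sec:block} with end dimensions $a_0=a_{\ell+1}=1$, so that $\mathrm{d}_{y}f_1$ and $\mathrm{d}_{x}f_2$ are themselves lower triangular with diagonal entries equal to products of positive scalars (values of $\sigma'$ times the $g(B^{(k)}_{ii})$ terms). Your proposal reads the architecture this way and phrases the sign argument accordingly, which is the correct and, strictly speaking, necessary repair; with it the stated block shape and both determinant inequalities follow, and $\det\mathrm{d}f=\det\mathrm{d}_{y}f_1\cdot\det\mathrm{d}_{x}f_2>0$ gives invertibility.
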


One consequence of this characterization of block-triangular flows is that if we train this neural network with the cost function
\begin{equation*}
  \log q(f_2(y,x)) + \det \mathrm{d}_xf_2(y,x),
\end{equation*}
we can obtain the entropy associated to the marginal density, $H(X)$, by ``deactivating'' the weights that operate on $y$. Corollary~\ref{cor:joint} suggests that given enough expressive power of our neural network architecture, we can train the network to both approximate $H(X|Y)$ and $H(X)$ by first training the marginal density $f_{(x)}$ and then training for $(f_1,\overline{f})$
on samples $(y_i,f_{(x)}(x_i))$. Along the lines of the appendix in~\citet{BNAF}, one can show that any conditional density can be approximated by a block-triangular flow as described.


\section{Additional Experiments}\label{additional_exp_NDoE}

\subsection{MI Estimation with Nonlinear Transformations} \label{4.1}
The first experiment we conducted are focused on estimating MI with samples that are generated with nonlinearly transformed Gaussians. Here we consider asinh and wiggly transformations that are provided in \citep{czyz2023beyond}. The different MI estimators were trained on the $20$-dimensional datasets of the varying sample size ($32$K, $64$K and $128$K). All the other settings remained the same.

\textbf{Results.} The results are presented in Figure~\ref{Gaussian_asinh} and Figure~\ref{Gaussian_wiggly}. Overall, all discriminative methods tend to underestimate MI, while our proposed methods demonstrated superior performance. In particular, \textbf{NDoE, BNAF} exhibited less bias compared to \textbf{BNAF} in cases with additional cubic transformations. Although \textbf{NDoE, Real NVP} performed worse than both \textbf{NDoE, BNAF} and \textbf{BNAF}, it still exhibited less bias than all other discriminative methods and the \textbf{DoE} estimators across all scenarios.

\begin{figure*}[!ht]
\centering
    \begin{subfigure}[b]{1.0\linewidth}
    \centering
    \includegraphics[scale=0.59]{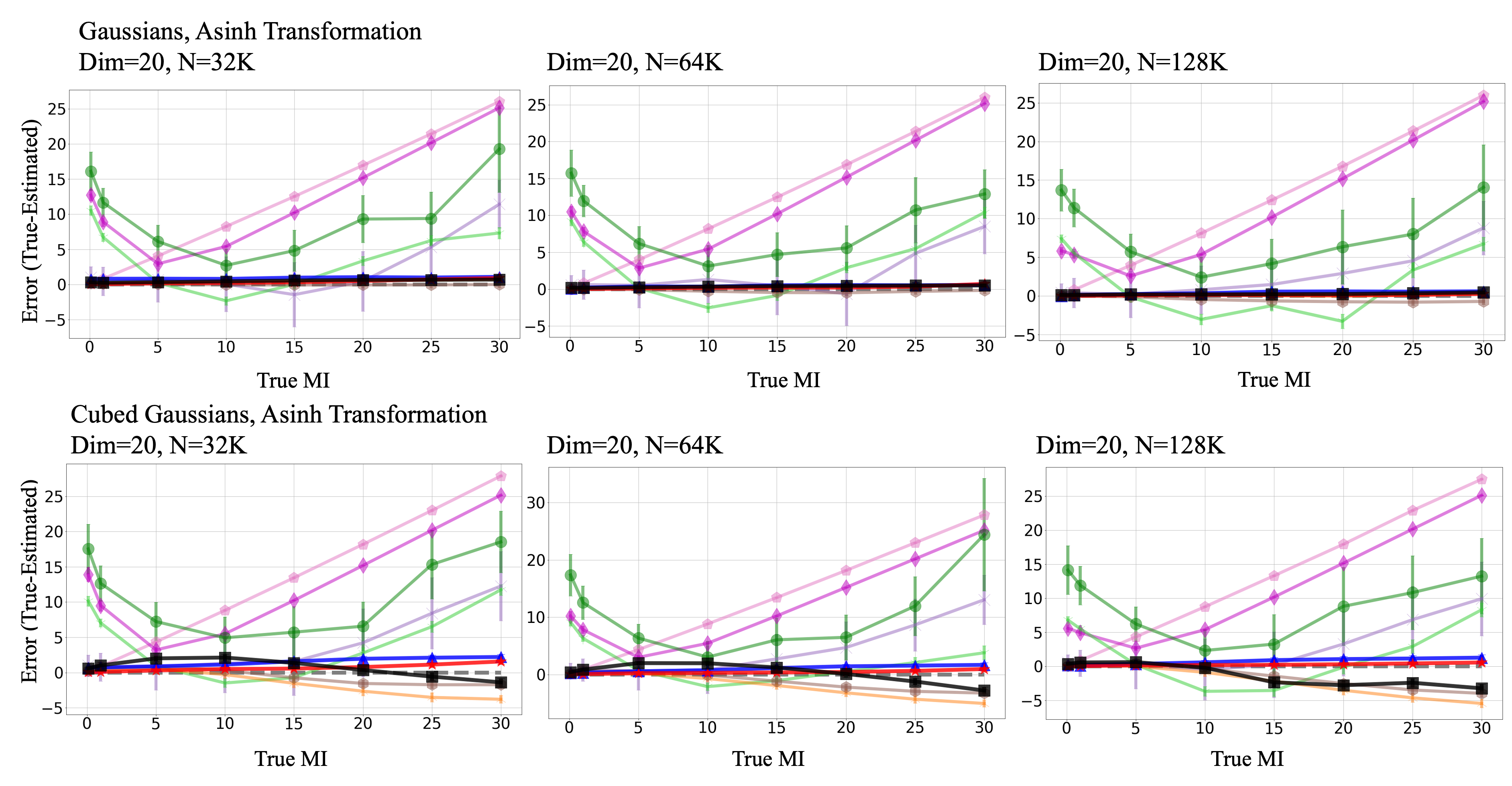}
    \end{subfigure}\\

    \begin{subfigure}[b]{1.0\linewidth}
    \centering
    \includegraphics[scale=0.65]{plots/legend_Gaussian.png}
    \end{subfigure}
\caption{MI estimation between asinh-transformed Gaussian variables (Top) and between asinh-transformed Gaussian variables with a cubic transformation (Bottom). The estimation error $(I(x,y)-\hat{I}(x,y))$ are reported. Closer to zero is better.}
\label{Gaussian_asinh}
 \hspace{150mm}
\end{figure*}   
\begin{figure*}[!ht]
\centering
    \begin{subfigure}[b]{1.0\linewidth}
    \centering
    \includegraphics[scale=0.61]{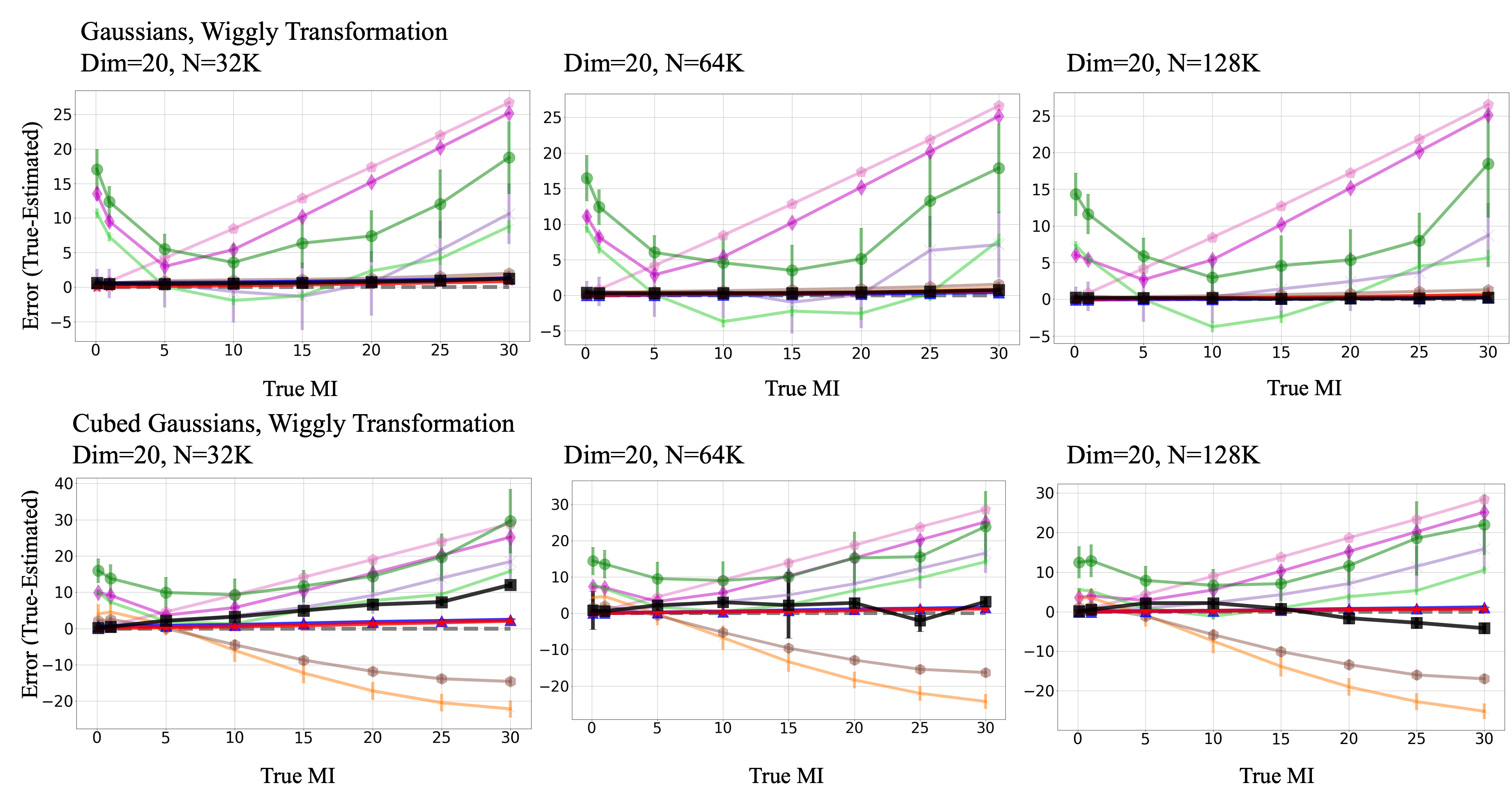}
    \end{subfigure}\\

    \begin{subfigure}[b]{1.0\linewidth}
    \centering
    \includegraphics[scale=0.65]{plots/legend_Gaussian.png}
    \end{subfigure}
\caption{MI estimation between wiggly-transformed Gaussian variables (Top) and between wiggly-transformed Gaussian variables with a cubic transformation (Bottom). The estimation error $(I(x,y)-\hat{I}(x,y))$ are reported. Closer to zero is better.}
 \hspace{150mm}
\label{Gaussian_wiggly}
\end{figure*}   

\subsection{MI Estimation on Extremely Small-sized Samples} \label{4.2}
The second experiment we conducted is similar to the last experiment in Section \ref{4.1}. We trained the different estimators on the training set of size $1024$ and tested them on another independently generated $1024$ samples to obtain MI estimates. All the other settings remained the same. We repeated the training process for $20, 50, 100$ and $200$ epochs. 

\textbf{Results.} The results of 20 dimensionalities are presented in Figure~\ref{Gaussian_N1024}. With the small number of epochs of training on extremely small-sized samples, all methods gave a bad performance on large MI. However, \textbf{DEMI} and \textbf{NDoE, BNAF} still obtained relatively good estimates when the true MI is close to zero. With the increase of the number of training epochs, the estimates of \textbf{NDoE, BNAF} started to converge to the true MI, while other discriminative methods lead to large errors. We noticed that \textbf{BNAF} also shows the trend of convergence, but the estimation results are worse than \textbf{NDoE, BNAF}. \textbf{NDoE, Real NVP} failed to achieve realistic results in this experiment.

\begin{figure*}[!ht]
\centering
    \begin{subfigure}[b]{1.0\linewidth}
    \centering
    \includegraphics[scale=0.62]{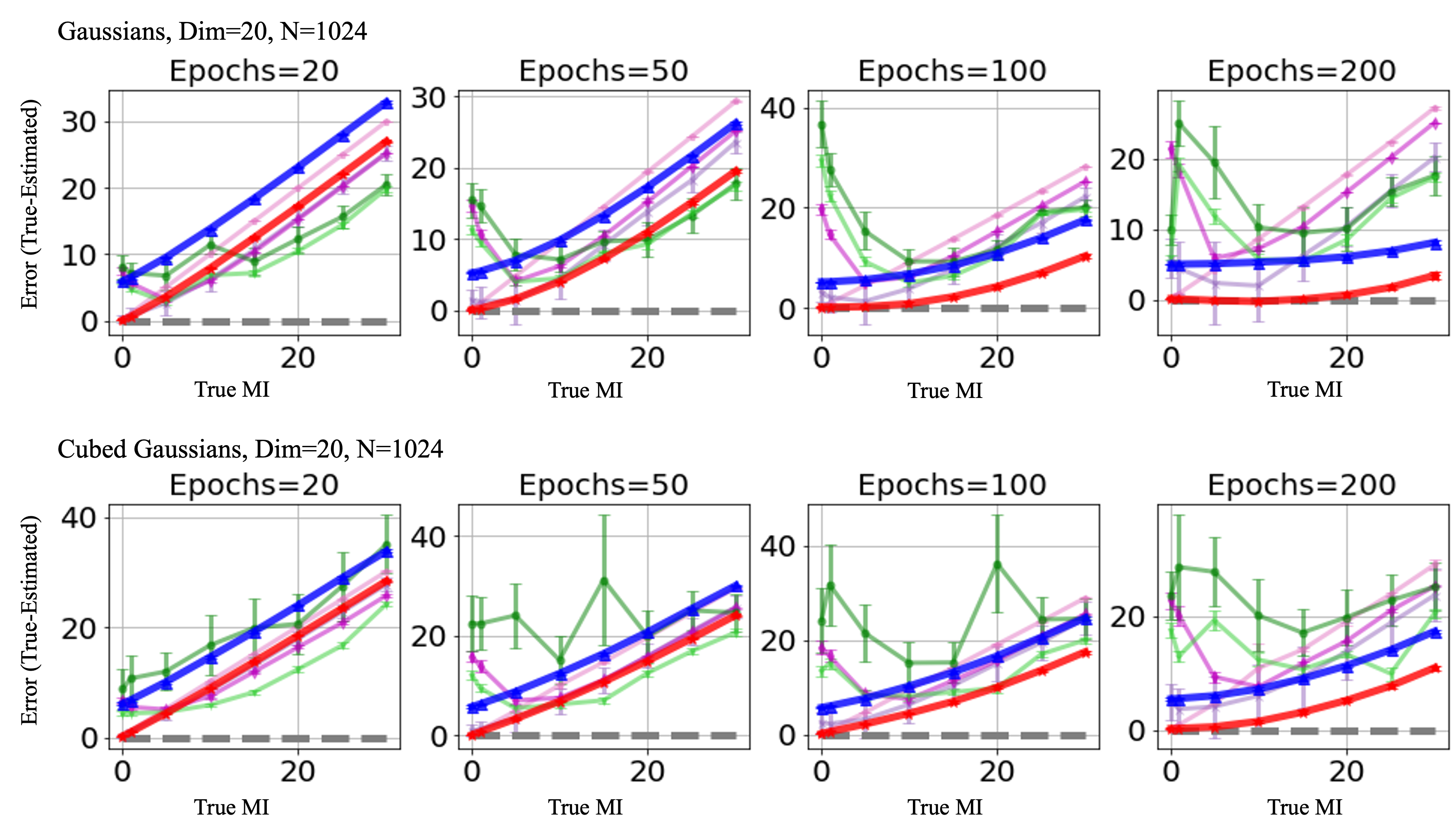}
    \end{subfigure}\\

    \begin{subfigure}[b]{1.0\linewidth}
    \centering
    \includegraphics[scale=0.65]{plots/legend_Gaussian.png}
    \end{subfigure}
\caption{MI estimation between multivariate Gaussian variables (Top) and between multivariate Gaussian variables with a cubic transformation (Bottom). The estimators are trained on training data of size 1024 with varying training epochs and the estimates are obtained from testing data of size 1024. The estimation error $(I(x,y)-\hat{I}(x,y))$ are reported. Closer to zero is better.}
 \hspace{150mm}
\label{Gaussian_N1024}
\end{figure*} 

\subsection{MI Estimation on Correlated Uniforms and Student's t Distributions}\label{appendix:b2.1}
It is well-known that non-Gaussian distributions, especially distributions with long tails, remains challenging for MI estimation for many reasons. In this experiment, we sampled from two random variables $X$ and $Y$ of correlated Uniform and Student's t distributions, for which the MI can be exactly obtained from their known correlation. The construction of Student's t distribution follows the idea of \citet{czyz2023beyond} where the Gaussians are taken from the first experiment. Note that, $I(X,Y)>0$ for the generated Student's t distribution even for independent Gaussians $X,Y$ in this example. We trained the different estimators on the training set of 128K samples and tested them on another independently generated 10240 samples to obtain MI estimates. All the other settings remained the same. 

\textbf{Results.} The results for Uniform variables are shown in Figure~\ref{Uniform_D20}, while those for Student's t variables are presented in Figure~\ref{Student_t_D20}. In the Uniform case, our proposed method provides better estimates with relatively small variance. \textbf{DEMI} achieves competitive results for small MI, but the error remains substantial for larger MI values. Notably, \textbf{NDoE, Real NVP} delivers even better performance when a cubic transformation is applied.

Unfortunately, most estimators fail to yield realistic results for Student's t distributions, whereas our method maintains small bias and variance in the estimates. \textbf{InfoNCE} displays a large bias for high MI, and \textbf{DEMI} shows very high variance. This experiment also excludes the influence of learning the target distribution from the same base distribution (Simple Gaussian) in \textbf{NDoE} and \textbf{BNAF}. Although the bias increases compared to the Gaussian examples, \textbf{NDoE} continues to demonstrate superior performance compared to other methods.
\begin{figure*}[!ht]
\centering
    \begin{subfigure}[b]{1.0\linewidth}
    \centering
    \includegraphics[scale=0.62]{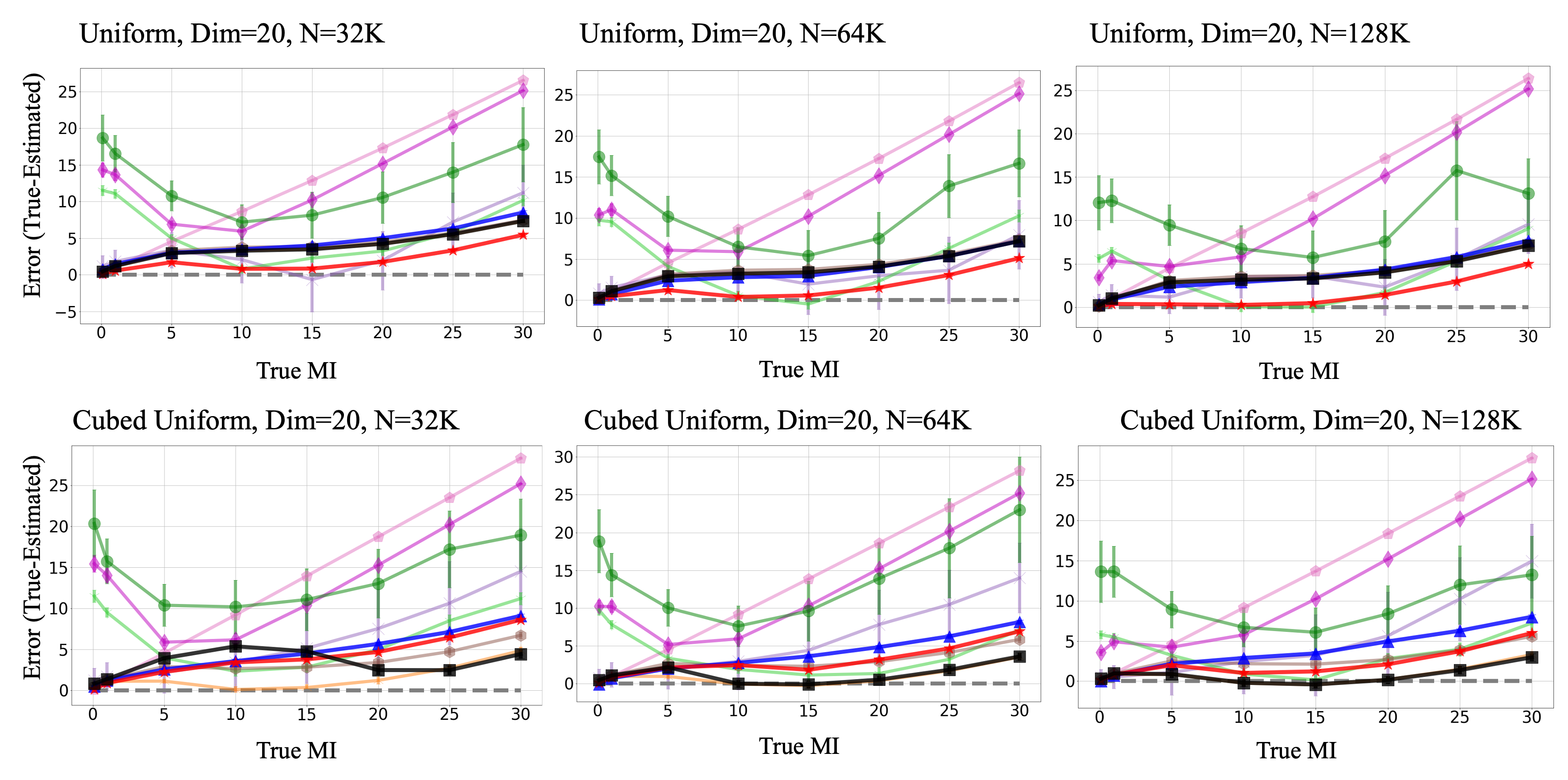}
    \end{subfigure}\\

    \begin{subfigure}[b]{1.0\linewidth}
    \centering
    \includegraphics[scale=0.65]{plots/legend_Gaussian.png}
    \end{subfigure}
\caption{MI estimation between multivariate Uniform variables (Top) and between multivariate Uniform variables with a cubic transformation (Bottom). The estimation error $(I(x,y)-\hat{I}(x,y))$ are reported. Closer to zero is better.}
 \hspace{150mm}
\label{Uniform_D20}
\end{figure*} 

\begin{figure*}[!ht]
\centering
    \begin{subfigure}[b]{1.0\linewidth}
    \centering
    \includegraphics[scale=0.58]{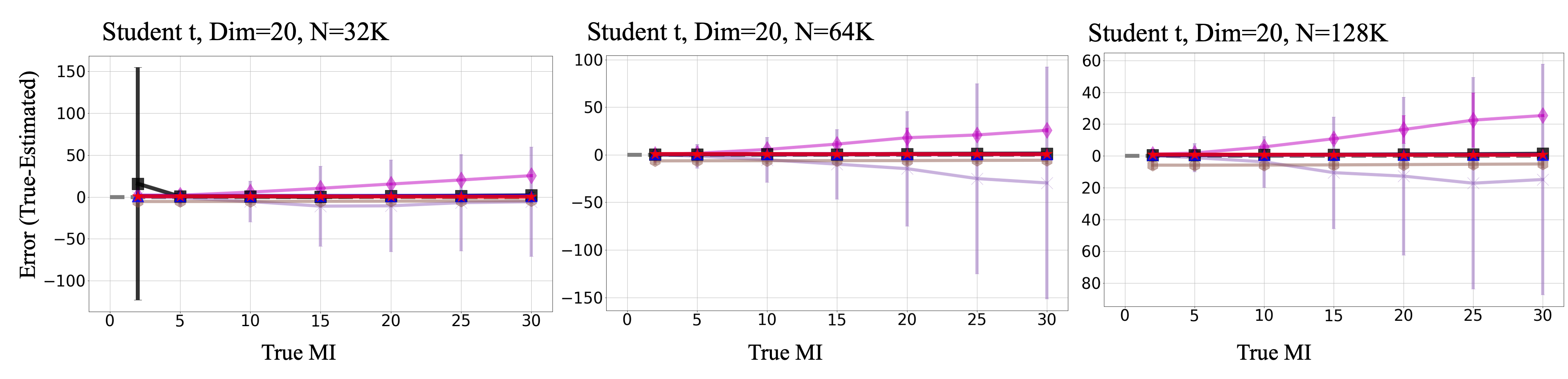}
    \end{subfigure}\\
     \begin{subfigure}[b]{1.0\linewidth}
    \centering
    \includegraphics[scale=0.65]{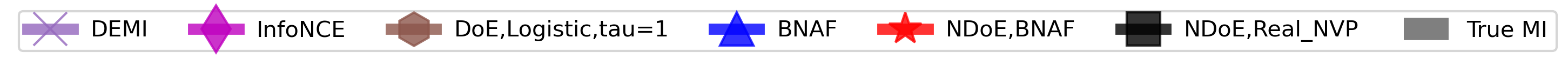}
    \end{subfigure}

\caption{MI estimation between multivariate Student's t random variables. The estimation error $(I(x,y)-\hat{I}(x,y))$ are reported. Closer to zero is better.}
 \hspace{150mm}
\label{Student_t_D20}
\end{figure*} 
\subsection{Comparison between NDoE and BNAF}
To demonstrate the empirical out-performance of our \textbf{NDoE, BNAF} method against the core baseline, \textbf{BNAF}, which utilizes two separate flows with identical hyperparameters and initializations, we consider the cubed Gaussians distribution as an example. The estimated MI is plotted against the number of training epochs to showcase the comparative performance. The results are illustrated in Figure~\ref{comparison}. All other settings remain the same.

\textbf{Results.} As shown in the plots, \textbf{NDoE, BNAF} demonstrated better convergence behavior with relatively small error after just 10 epochs of training, whereas \textbf{BNAF} required between 20 to 60 epochs to achieve competitive results. This performance gap is particularly pronounced when the number of training samples is smaller. Another noteworthy observation is that \textbf{NDoE, BNAF} empirically behaved as an upper-bound estimator, consistently producing estimates greater than zero, which aligns well with the fundamental properties of mutual information (MI). In contrast, \textbf{BNAF} does not exhibit this property, which is often regarded as a key limitation of generative methods. We attribute this phenomenon to the Correlation Boosting Effect proposed by \citet{biksg}, though we do not provide a rigorous proof at this stage.
\begin{figure*}[!ht]
\centering
    \begin{subfigure}[b]{1.0\linewidth}
    \centering
    \includegraphics[scale=0.75]{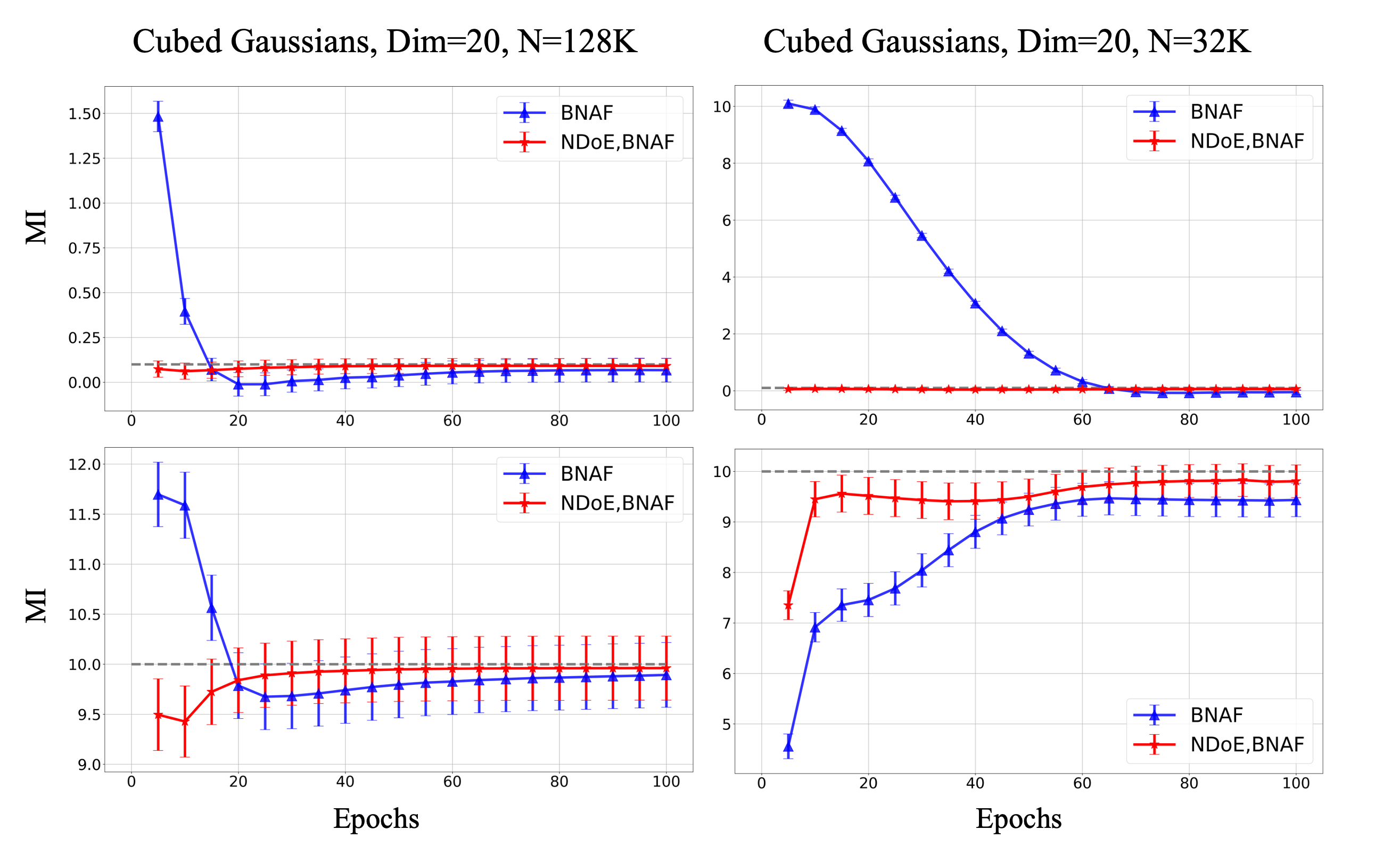}
    \end{subfigure}\\

\caption{MI estimation between multivariate cubed Gaussian variables. The estimation $\hat{I}(x,y)$ of varying training epochs versus the true underlying MI are reported.}
 \hspace{150mm}
\label{comparison}
\end{figure*} 

\subsection{Long-run Training Behavior of NDoE}\label{appendix:b2.2}
We noticed from the last three experiments that \textbf{DEMI} underestimates the MI when the random variables are highly dependent. The underestimation is not alleviated by the increased number of training epochs. At the same time, our method shows the reduction of bias with the repetition of training. Thus, we proposed an assumption that the discriminative methods diverge to the true MI for high mutual dependence, while generative models have a good convergence property with sufficient training samples, with the precondition of enough expressive power of the neural network. In other words, there is a systematic bias in the discriminative methods which are positively correlated to mutual dependence. To verify this experimentally, we conducted a long-run training behavior experiment using similar settings of \citet{demi}. Here we only verify the long-run training behavior for \textbf{NDoE, BNAF}. All the estimators were trained on the 20-dimensional Gaussian  and Cubed Gaussian case for 100000 training steps with a batch size of $128$. Samples were drawn directly from the generating distributions. We did this for four ground-truth MI values of $0.1,10,20$, and $30$. 

\textbf{Results.} The results are shown in Figure~\ref{longrun} and Figure~\ref{longrun_cubed}. Among all the methods, our method shows the best convergence behaviour for all MI with the increase of training epochs, and the variances remains relatively small \textbf{DEMI} method is competitive when true MI is close to $0$. However, it underestimates MI vastly for large true MI. We also noticed that \textbf{SMILE} with the parameter $\tau=1, 5$ tends to overestimate MI even though they are based on a lower bound of it. \citet{limit_mi} suggests that the reasoning behind this could be the sensitivity of the estimate of $-\textnormal{ln}\ \mathbb{E}[e^{f(x,y)}]$ to outliers. However, the parameter $\tau$ actually clips the term $e^{f(x,y)}$ to the interval $[e^{-\tau},e^{\tau}]$, which removes outliers from the neural network outputs.

\begin{figure*}[!ht]
\centering
    \begin{subfigure}[b]{1.0\linewidth}
    \centering
    \includegraphics[scale=0.75]{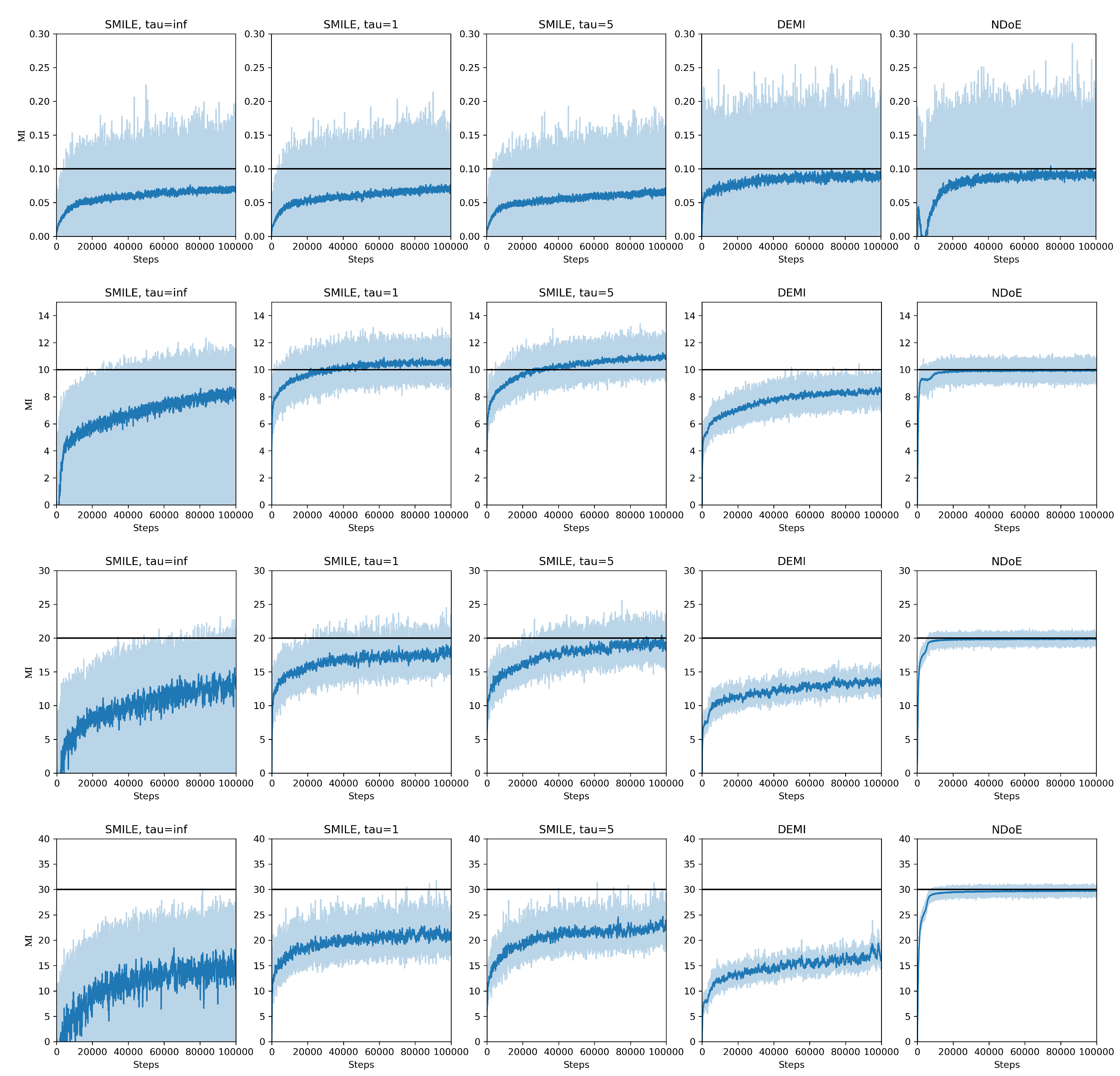}
    \end{subfigure}\\

\caption{MI estimation between multivariate Gaussian variables. The estimation of each training step $\hat{I}(x,y)$ versus the true underlying MI are reported. The estimators are trained on 100000 training steps with varying true MI.}
\label{longrun}
 \hspace{150mm}
\end{figure*} 

\begin{figure*}[!ht]
\centering
    \begin{subfigure}[b]{1.0\linewidth}
    \centering
    \includegraphics[scale=0.75]{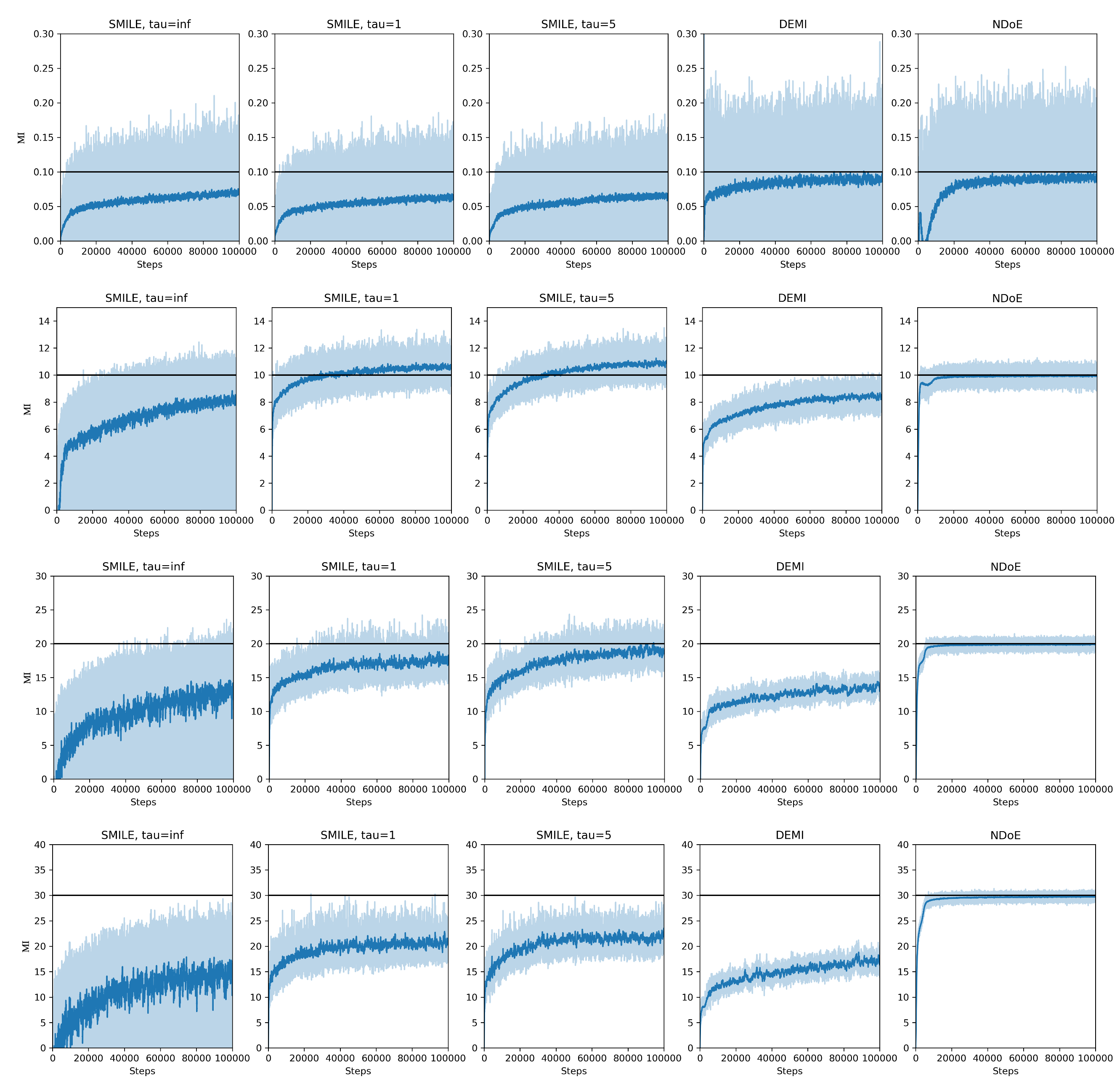}
    \end{subfigure}\\

\caption{MI estimation between multivariate Gaussian variables with a cubic transformation. The estimation of each training step $\hat{I}(x,y)$ versus the true underlying MI are reported. The estimators are trained on 100000 training steps with varying true MI..}
\label{longrun_cubed}
 \hspace{150mm}
\end{figure*}

\subsection{Asymmetry test and Varying Batchsize}
It is obvious to see that our proposed method is asymmetric, i.e. estimating the MI by using $I(X,Y)=H(X)-H(X,Y)$ and $I(X,Y)=H(Y)-H(Y,X)$ could obtain different results. We did an extra experiment on the above distribution to show that the difference is minor. The results are shown in Figure~\ref{asymm_20d}. Since our method tends to underestimate the true MI in the experiments, choosing the larger estimation will lead to less bias in most cases. Another experiment focuses on varying batchsize. It is believed that the poor performance of discriminative methods on high MI estimation is dependent on the batch size. We choose the batchsize of $64,128,256$ and $512$ to see the performance of our method and compare them with the baselines. The results are shown in Figure~\ref{batchsize_gaussian}.

\begin{figure*}[!ht]
\centering
    \begin{subfigure}[b]{1.0\linewidth}
    \centering
    \includegraphics[scale=0.20]{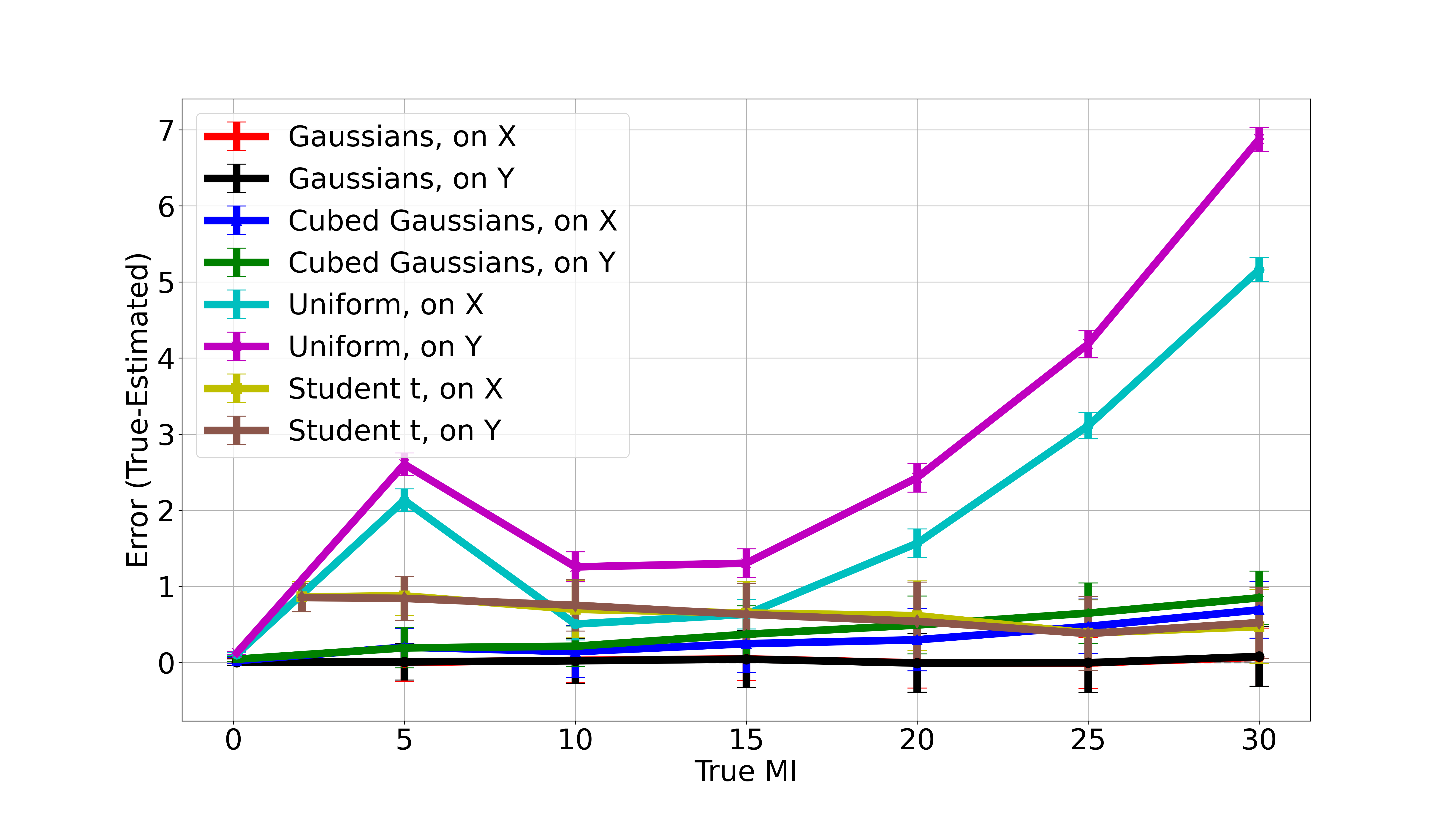}
    \end{subfigure}\\

\caption{MI estimation between random variables of different distributions. The estimation error $(I(x,y)-\hat{I}(x,y))$ are reported. Closer to zero is better.}
\label{asymm_20d}
 \hspace{150mm}
\end{figure*} 

\begin{figure*}[!ht]
\centering
    \begin{subfigure}[b]{1.0\linewidth}
    \centering
    \includegraphics[scale=0.60]{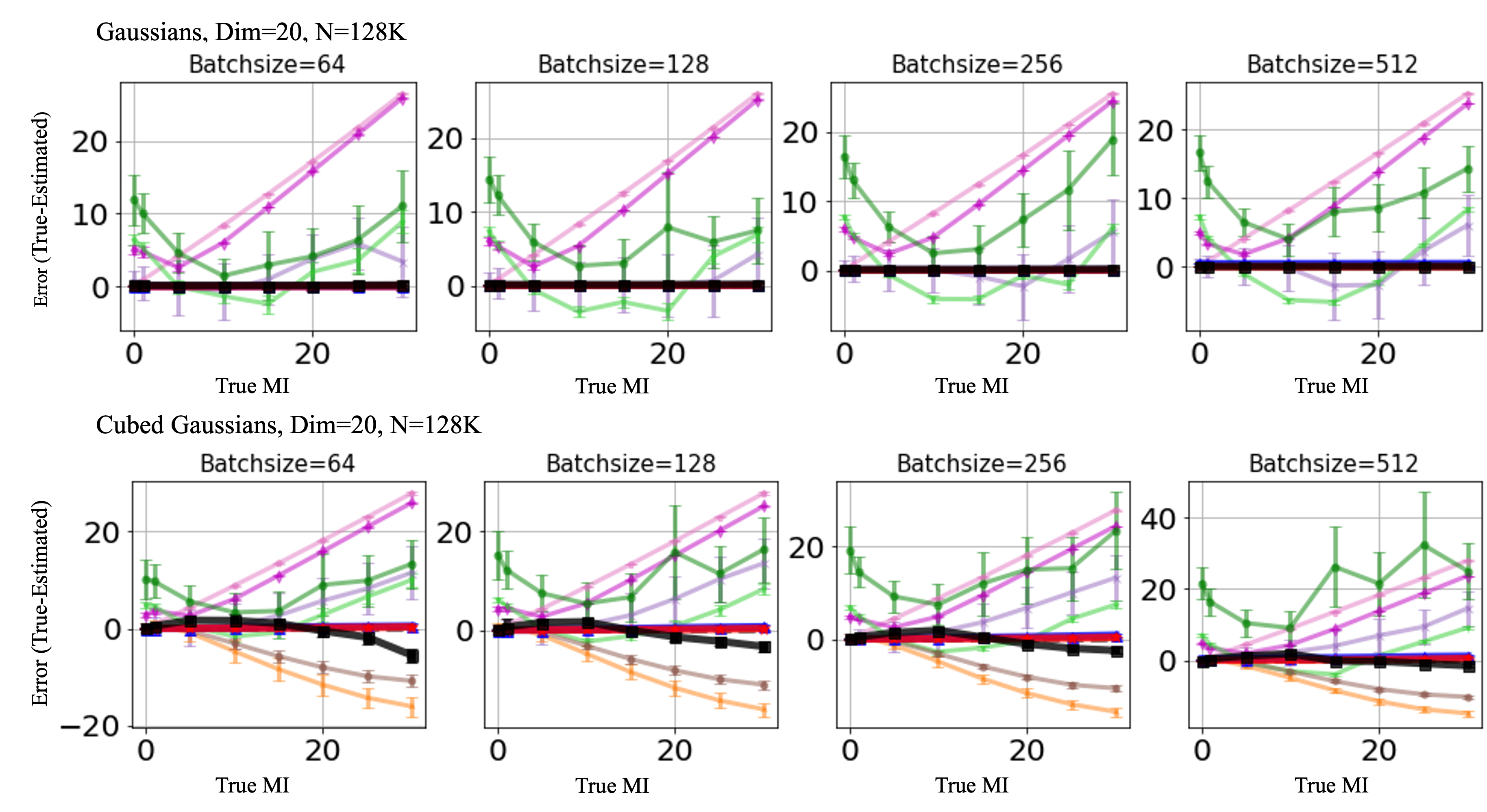}
    \end{subfigure}\\
    \begin{subfigure}[b]{1.0\linewidth}
    \centering
    \includegraphics[scale=0.65]{plots/legend_Gaussian.png}
    \end{subfigure}
\caption{MI estimation between multivariate Gaussian variables (Top) and between multivariate Gaussian variables with a cubic transformation (Bottom). The estimators are trained with varying training batchsize.  The estimation error $(I(x,y)-\hat{I}(x,y))$ are reported. Closer to zero is better.}
 \hspace{150mm}
\label{batchsize_gaussian}
\end{figure*} 

\section{Self-consistency}
In applications with real data, obtaining the ground truth MI is challenging or not possible. However, as suggested by~\citet{smine}, one can still test whether a MI estimator satisfies some of the fundamental properties of MI: $I(X,Y)=0$ if $X$ and $Y$ are independent, the data processing inequality is satisfied (that is, transforming $X$ and $Y$ should not increase the MI), and additivity. 

Following~\citet{smine}, we conducted self-consistency tests on high-dimensional images (MNIST) under three settings, where obtaining the ground truth MI is challenging. These settings involve processing images $X$ and $Y$ in different ways to assess the performance of various methods: \textbf{DEMI}, \textbf{InfoNCE}, \textbf{SMILE}, \textbf{NDoE, BNAF} and \textbf{BNAF}, where \textbf{NDoE, BNAF} and \textbf{BNAF} apply autoencoders (AE) for dimensionality reduction. \textbf{DoE} is not included as it is considered for certain failure in the experiments before. The three settings include:

\begin{enumerate}
\item $X$ is an image, and $Y$ is the same image with the bottom rows masked, leaving the top $t$ rows. The goal is to observe whether MI is non-decreasing with $t$.
Methods are evaluated under various $t$ values, normalized by the estimated MI between $X$ and itself.
\item Data-Processing. $X$ corresponds to two identical images, and $Y$ comprises the top $t_1$ and $t_2$ rows of the two images ($t_1 \geq t_2$).
The evaluation involves comparing the estimated MI ratio between $[X, X]$ and $[Y, h(Y)]$ to the true MI between $X$ and $Y$, where $h(Y)$ use $t_2=t_{1}-3$ rows.
\item Additivity. $X$ corresponds to two independent images, and $Y$ includes the top t rows of both. The assessment focuses on the estimated MI ratio between $[X_1, X_2]$ and $[Y_1, Y_2]$ relative to the true MI between $X$ and $Y$.
\end{enumerate}

{\bf Results.} The results are shown in Figures~\ref{self_consist}. Regarding the baseline, most methods correctly predict zero MI when $X$ and $Y$ are independent, thereby passing the initial self-consistency test. Additionally, the estimated MI shows a non-decreasing trend with increasing $t$, although the slopes differ among the methods. The ratio obtained by \textbf{NDoE, BNAF} is very close to the true ratio.

For the data-processing test, we set $t_2=t_1-3$. Ideally, the estimator should satisfy $\hat{I}([X, X]; [Y, h(Y )])/ \hat{I}(X, Y)\approx 1$. This is because additional processing should not result in an increase in information. All methods perform relatively well except for \textbf{NDoE, BNAF} and \textbf{BNAF}. This is possibly due to the limited capacity of AE.

In the additivity setting, the estimator should ideally double its value compared to the baseline with the same t, i.e. $\hat{I}([X_1, X_2]; [Y_1, Y_2])/ \hat{I}(X, Y )\approx 2$. Discriminative approaches did not perform well in this case, except when t was very small. As t increased, this ratio converged to 1, possibly due to initialization and saturation of the training objective. However, \textbf{NDoE, BNAF} performed well on this test except when t is small ($t=0, 3$). Compared with the results from \citet{smine}, it is promising to see improved performance by using VAE instead.

\begin{figure*}[!ht]
\centering

    \begin{subfigure}[b]{1.0\linewidth}
         \centering
         \includegraphics[scale=0.58
         ]{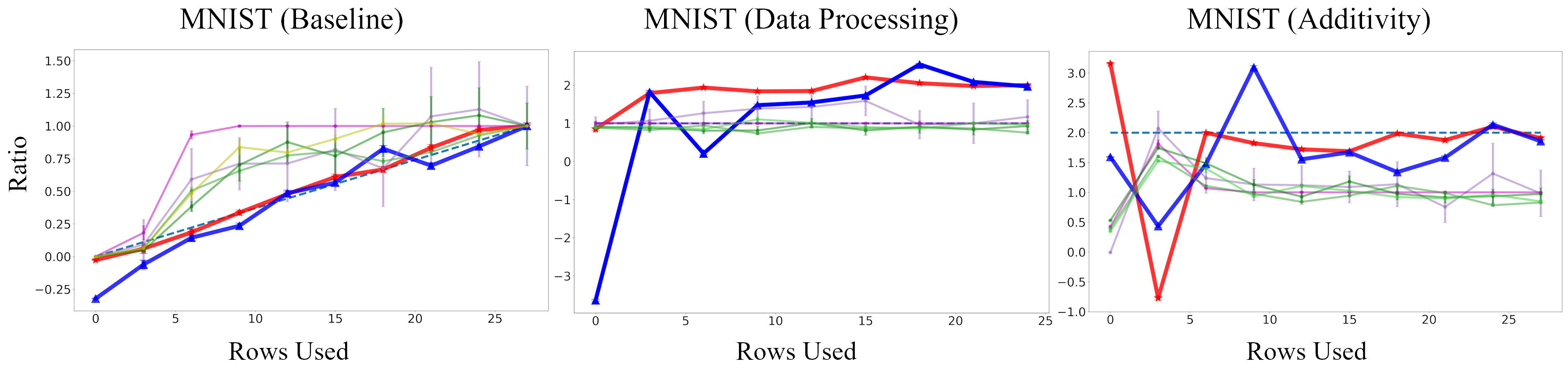}
       
    \end{subfigure}\\
    \begin{subfigure}[b]{1.0\linewidth}
    \centering
    \begin{tikzpicture}
    \node (img1)  {\includegraphics[scale=0.63]{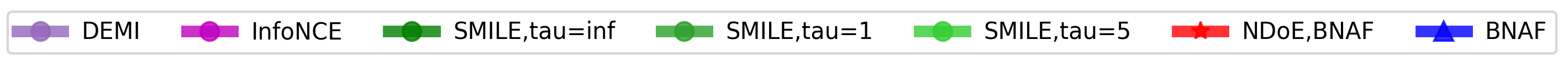}};
    \node[left=of img1, node distance=0cm,yshift=0cm,  xshift=1cm, font=\small\color{black}] {Estimator Legend:};
    \end{tikzpicture}
    \label{subfig:legend3}
    \end{subfigure}
    \caption{Evaluation on high-dimensional images (MNIST) under three settings. From left to right: Evaluation of $\hat{I}(X;Y)/\hat{I}(X;X)$; Evaluation of $\hat{I}([X,X];[Y,h(Y)]/\hat{I}(X;Y)$, where the ideal value is 1; Evaluation of $\hat{I}([X_1,X_2];[Y_1,Y_2]/\hat{I}(X;Y)$, where the ideal value is 1. $X$ is an image, $Y$ contains the top $t$ rows of $X$ and $h(Y)$ contains the top $(t-3)$ rows of $X$.}
    \label{self_consist}
     \hspace{150mm}
\end{figure*} 

\end{document}